\documentclass{article} 
\usepackage{hyperref}
\usepackage[accepted]{arxiv_version}



\usepackage{amsmath,amsfonts,bm}









\def\eqref#1{equation~\ref{#1}}









\def\1{\bm{1}}










\DeclareMathAlphabet{\mathsfit}{\encodingdefault}{\sfdefault}{m}{sl}
\SetMathAlphabet{\mathsfit}{bold}{\encodingdefault}{\sfdefault}{bx}{n}













\usepackage{times}
\usepackage{amsmath}
\usepackage{amssymb}
\usepackage{graphicx}
\usepackage{caption}
\usepackage{subcaption}
\usepackage{xcolor}
\usepackage{booktabs}
\usepackage[normalem]{ulem}

\usepackage{amsthm}
\usepackage{thmtools} 
\usepackage{thm-restate}
\newtheorem{theorem}{Theorem}
\newtheorem{remark}{Remark}
\newtheorem{definition}{Definition}
\newtheorem{lemma}{Lemma}

\newcommand{\Lip}{\operatorname{Lip}}
\newcommand{\pad}{\operatorname{pad}}

\newcommand{\diag}{\operatorname{diag}}

\newcommand{\arctanh}{\operatorname{arctanh}}
\newcommand{\DFT}{\mathcal{F}}
\newcommand{\ConvOp}{\mathcal{K}}
\newcommand{\MatrixConvOp}{{K}}
\newcommand{\X}{\mathcal{X}}

\newcommand{\toVec}{\operatorname*{Vec}}

\begin{document}
\twocolumn[
\icmltitle{Depthwise Separable Convolutions Allow for Fast and Memory-Efficient Spectral Normalization}



\icmlsetsymbol{equal}{*}

\begin{icmlauthorlist}
\icmlauthor{Christina Runkel}{equal,si}
\icmlauthor{Christian Etmann}{equal,cam}
\icmlauthor{Michael M\"{o}ller}{si}
\icmlauthor{Carola-Bibiane Sch\"{o}nlieb}{cam}
\end{icmlauthorlist}

\icmlaffiliation{si}{Department for Computer Science and Electrical Engineering, University of Siegen, Siegen, Germany}
\icmlaffiliation{cam}{Department of Applied Mathematics and Theoretical Physics, University of Cambridge, Cambridge, UK}

\icmlcorrespondingauthor{Christina Runkel}{christina.runkel@student.uni-siegen.de}
\icmlcorrespondingauthor{Christian Etmann}{cetmann@damtp.cam.ac.uk}

\icmlkeywords{}

\vskip 0.3in
]



\printAffiliationsAndNotice{\icmlEqualContribution} 

\begin{abstract}
An increasing number of models require the control of the spectral norm of convolutional layers of a neural network. While there is an abundance of methods for estimating and enforcing upper bounds on those during training, they are typically costly in either memory or time. In this work, we introduce a very simple method for spectral normalization of depthwise separable convolutions, which introduces negligible computational and memory overhead. We demonstrate the effectiveness of our method on image classification tasks using standard architectures like MobileNetV2.
\end{abstract}
\section{Introduction} \label{sec:introduction}

Neural networks, as universal approximators (of e.g. continuous functions), may represent highly complex functions, which in practice results in them being quite sensitive to their inputs, as evidenced e.g. by adversarial examples \citep{szegedy2013intriguing}. One simple measure of how sensitive to its inputs a network is, is the network's Lipschitz constant. Indeed, an increasing number of applications call for the control of this Lipschitz constant, which yields guarantees for the input-output-sensitivity. One particular way of controlling this Lipschitz constant of a network is by \emph{spectral normalization} of its convolutions and weight matrices, which explicitly limits the network's Lipschitz constant in terms of the 2-norm.

Since it determines a trade-off between expressiveness and robustness, controlling a network's Lipschitz constant is a form of regularization \citep{oberman2018lipschitz}. A direct application of this principle is the limiting of the network's Lipschitz constant in order to robustify it against adversarial attacks \citep{NEURIPS2018_48584348}.
A version of this idea is also found in spectrally normalized GANs \citep{miyato2018spectral}, which regularize the discriminator and which results in better-looking images. Note that in practice this is similar to Wasserstein GANs \citep{arjovsky2017wasserstein}, which require enforcing the Lipschitz constant of their critic network to be lower or equal to one (in order to approximate a Wasserstein distance). Another instance in which spectral normalization can be used to enforce necessary guarantees is found in invertible residual networks \citep{behrmann2019invertible}, which require the Lipschitz constant of the residual blocks to be below one. Furthermore, the Lipschitz constant of an invertible network influences the stability of inversion, as shown in \citep{behrmann2020}. For example in additive coupling layers \citep{dinh2014nice}, limiting the layers' Lipschitz constant will result in an increase in \emph{both} the forward and inverse numerical stability. This in particular has implications for training with restored (instead of stored) activations \citep{gomez2017reversible}.

To calculate and (thus enforce) spectral norms on convolutional layers, common methods either use the power method to calculate the norms, or use (typically non-tight) upper bounds as an approximation. Enforcing these spectral norms is usually done by performing a normalization step (i.e. dividing the layers' weights by the computed Lipschitz constant) after each step of (stochastic) gradient descent, which means that the calculation of the spectral norms has to be performed many times during training. This means that the spectral normalization can strongly influence the overall speed of training, potentially bottlenecking the training duration. Details on this are given in section \ref{subsec:spectral_normalization}.

In this work, we propose a novel method for spectral normalization tailored to depthwise separable convolutions \citep{sifre2014rigid}. Depthwise separable convolutions are a form of convolutional layers, in which the channel-wise convolution and the inter-channel combination of channels are decoupled.
Our proposed method makes use of the specific structure of depthwise separable convolutions to achieve significant speedups compared to other methods, while having practically no additional memory overhead. In a realistic training scenario (training MobileNetV2 on ImageNet), each epoch takes only 2\% additional training time -- thus making spectral normalization essentially 'for free' for depthwise separable convolutional networks.

\section{Problem setting}\label{sec:problem_setting}
\subsection{Depthwise separable convolutions}\label{subsec:dw_conv}
Let $\theta^{\langle i, j \rangle}$ denote a filter corresponding to an input's $j$-th channel (denoted $x^{\langle j \rangle}$) and an output's $i$-th channel (denoted $y^{\langle i \rangle}$). Then
\begin{equation}\label{eq:convolutions}
    y^{\langle i \rangle} = \textstyle{\sum_j} \theta^{\langle i, j \rangle} \ast x^{\langle j \rangle}
\end{equation}
describes the conventional multi-channel convolution operation, where $\ast$ signifies a spatial filtering (a convolution, respectively a cross-correlation). The spatial and inter-channel computations of these multi-channel convolutions are hence inherently linked. Recently, \emph{depthwise separable convolutions} have arisen as a family of computationally cheaper (and yet expressive) alternatives, in which the spatial and inter-channel computations are decoupled in the form of \emph{depthwise} respectively \emph{pointwise} convolutions. Depthwise convolutions apply filtering to each input channel independently (resulting in the same number of output channels), which can be equivalently represented by setting $\theta^{\langle i, j \rangle}=0$ for all $i \neq j$ in equation (\ref{eq:convolutions}). These serve as the spatial operations of depthwise separable convolutions. Pointwise convolutions on the other hand can be represented by multi-channel convolutions as in equation (\ref{eq:convolutions}) for which $\theta^{\langle i, j \rangle} \in \mathbb{R}$ (e.g. $1 \times 1$ convolutions in 2D), i.e. for non-strided filterings the output channels are merely linear combinations of the input channels. They constitute the inter-channel computations of depthwise separable convolutions. 

The various realizations of depthwise separable convolutions in the literature mainly differ by their number and order of operations and where (and how often) a nonlinearity is applied. While a depthwise convolution followed by a pointwise convolution is a low-rank factorization of a conventional convolution \citep{sifre2014rigid}, recent approaches usually incorporate nonlinearities between the two types of operations \citep{xception, howard2017, sandler2018mobilenetv2, efficientnet}. It is worth highlighting that EfficientNet \citep{efficientnet}, which employs the depthwise separable convolutions defined by MobileNetV2 \citep{sandler2018mobilenetv2}, is currently state-of-the-art in the ILSVRC (ImageNet) challenge \citep{pham2020meta}.

\subsection{Mathematical preliminaries}\label{subsec:math_prelim}
A function $g: \X \rightarrow \mathcal{Y}$ is called Lipschitz-continuous on two normed spaces $(\X,\|\cdot\|_{\X})$ and $({\mathcal{Y}},\|\cdot\|_\mathcal{Y})$, if there is a constant $L \geq 0$ such that $$\|g(x_1)-g(x_2)\|_{\mathcal{Y}} \leq L\|x_1-x_2\|_\X$$ for all $x_1, x_2 \in \X$. The smallest such constant $L$ is referred to as the Lipschitz-constant of $g$ and will be denoted $\Lip (g)$. \newline

For continuous, almost everywhere differentiable functions, the Lipschitz constant of a function $g$ is given by the supremum of the operator norm (w.r.t. the norms of $\X$, ${\mathcal{Y}}$) of all possible derivatives, i.e. $\Lip (g) = \sup_{x \in \X} \| \partial g(x)  \|_{\X \to {\mathcal{Y}}}$. In the following, we will assume that $\X$ and ${\mathcal{Y}}$ are Euclidean spaces (equipped with the 2-norm), in which case the above operator norm becomes the \emph{spectral norm} of $\partial g(x)$ (where defined), i.e. the largest singular value of $\partial g(x)$. This highlights the difficulty of actually calculating the Lipschitz constant of a neural network -- even in the case of locally linear functions, such as e.g. ReLU networks, this would require the evaluation of the operator norm in each locally linear region. In fact, even for a two layer neural network, calculating the Lipschitz constant is NP-hard \citep{scaman2018}. Approximating this Lipschitz constant can be achieved by phrasing the Lipschitz estimation problem as an optimization problem \citep{fazlyab2019, latorre2020, chen2020semialgebraic}. These are, however, still very costly.

On the other hand, calculating \emph{upper bounds} of Lipschitz constants of neural networks is tractable, if finding an upper bound of the individual layers' Lipschitz constants is tractable. In particular, if $g=f_n \circ \cdots \circ f_1$ for Lipschitz continuous functions $f_i$, then $\Lip (g) \leq \prod_i^n \Lip (f_i)$  (\emph{chain rule for Lipschitz constants}). While the bounds this chain rule yields may be too loose for approximating the true Lipschitz constant of the \emph{whole network}, it may still be possible to make approximate statements about \emph{individual layers'} Lipschitz constants.
For layers of the type $f(x)=\sigma (Ax + b)$, where $A$ is linear, $b$ is a bias vector and $\sigma$ is an activation function with bounded Lipschitz constant, it holds that
\begin{equation}
    \begin{aligned} \label{eq:lipschitz_per_layer}
        \Lip (f) &\leq \Lip (\sigma) \cdot \Lip (A)  = \Lip (\sigma) \|A\|_2,
    \end{aligned}
\end{equation}
i.e.~the Lipschitz constant of a nonlinear layer can typically be upper bounded by the spectral norm of its linear portion. This is true in particular for convolutional layers or dense layers with e.g.~ReLU, tanh, sigmoid, softmax or ELU nonlinearity (all of which have $\Lip (\sigma)=1)$.

\subsection{Spectral normalization}\label{subsec:spectral_normalization}
One particularly important insight about Lipschitz constants is the fact that the ability to calculate them often enables one to enforce a specific Lipschitz constant in a function. 

In fact, it holds that $\Lip (\lambda f)=\lambda \Lip (f)$ for any $\lambda >0$, so that $\tilde{f}:= \lambda \cdot f/ \Lip(f)$ has $\Lip(\tilde{f})= \lambda$ for some desired $\lambda>0$. Instead of scaling the function itself, most types of neural network layers admit a scaling of their associated parameters: Since for example convolutions are positively 1-homogeneous in their kernels, it suffices to scale their filter kernel accordingly. That means that for a convolutional operator $\ConvOp_\theta$ with filter kernel $\theta$ (and for our setting of spaces equipped with the 2-norm), it holds that by rescaling this kernel via $\tilde{\theta}=\theta / \| \ConvOp_\theta \|_2$ (and keeping a possible bias), one obtains a new convolutional operator $\ConvOp_{\tilde{\theta}}$ with $\Lip(\ConvOp_{\tilde{\theta}})=1$. In the following, we will refer to this normalization procedure as \textbf{spectral normalization}. A common way of training neural networks with spectrally normalized convolutional kernels is to apply spectral normalization after each optimization step. Note that if we overestimate the spectral norm, then still $\Lip(\ConvOp_{\tilde{\theta}})\leq 1$.

If one wishes to enforce some specific Lipschitz constant, multiplying the output of the normalized convolution by some desired $K>0$ then makes the overall mapping $K$-Lipschitz. We call this method \textbf{hard scaling} and refer $K$ to the \emph{scaling constant}. \newline
One can make this scaling a bit more flexible by instead multiplying with $K\cdot \tanh (s)$, where $s\in \mathbb{R}$ is a learnable parameter. Since $\tanh (s) \in (-1,1)$, this will bound the enforced Lipschitz constant to be below $K$. Due to the dependence on the learnable parameter $s$, one allows the 'right' Lipschitz constant up to $K$ to be learned. We will refer to this variant as \textbf{soft scaling}. 

\subsection{Spectral norm estimation for convolutional layers} \label{subsec:specnorm_for_conv_layers}
Unlike in the case of dense layers, whose parameters directly allow for a computation of their spectral norm via matrix methods, it is more difficult to assess a convolution's norm solely from its parameters. In fact, the same kernel generally defines not one, but a whole family of operators (depending e.g. on the input resolution and stride). The estimation of their norms can either be performed with generic methods or make use of the operators' specific structure.
One approach is to restrict the convolutional operators to be exactly orthogonal \citep{xiao2018dynamical,li2019preventing} to strictly enforce unit-norm convolutions. By relaxing this constraint to approximately orthogonal convolutions, the spectral norm is likewise relaxed to values close to \citep{cisse2017} or up to \citep{qian2018} one.
Another class of methods is to exploit the structure of multi-channel convolution operators to gain insights on their singular values. In particular, by making use of the theory of doubly block circulant matrices \citet{sedghi2019} are able to derive an exact formula for the singular values of \emph{circulant} 2D convolutions, using Fourier transforms and a singular value decomposition. While this is exact, the values are only approximate for the more commonly used zero-padded or 'valid' convolutions. Furthermore, the computation is quite slow. Similarly-derived and faster to compute upper bounds can be derived for circulant convolutions \citep{singla2019} or padded convolutions \citep{araujo2020}. As these methods are designed for 2D convolutions (using the structure of doubly block circulant/Toeplitz matrices), it is furthermore not immediately clear how to extend these to 3D.

A very general method for numerically calculating spectral norms of linear operators is to employ the \textbf{power method}
\citep{yoshida2017, miyato2018, virmaux2018, tsuzuku2018, farnia2018, gouk2018, behrmann2019invertible}, where the spectral norm of a linear operator is approximated by an iterative procedure. Here, for a randomly initialized vector $v_0$ from the domain of the convolutional operator $A$, the iteration
\begin{equation}
    v_{i+1} = A^TAv_i / \|v_i\|_2 \label{eq:power_method}
\end{equation}
converges to the leading right-singular vector of $A$ (almost surely), while $\sigma_i:=\sqrt{\|v_i\|_2}$ converges to the desired spectral norm $\|A\|_2$. The iteration (\ref{eq:power_method}) highlights the computational burden of performing the power method during training, as the most computationally demanding operation of a convolutional neural network has to be performed multiple times per gradient step until some desired precision is reached. The training time is thus effectively multiplied by (almost) the number of iterations that have to be performed. This desired precision level can be controlled by a user-specified parameter $\varepsilon$, which defines the stopping criterion $\|A^TAv_{i}- \sigma^2_{i} v_{i}\| < \varepsilon.$ The necessary number of iterations can be greatly reduced by not initializing $v_0$ randomly, but instead initializing with the estimated leading right-singular vector from the previous gradient descent step (henceforth called \textbf{warm-start power method}), assuming the most recent optimization step did not change the leading right-singular vector too much. This, however, \emph{requires keeping those vectors in memory, which are of the same size as the input features}. This can be \emph{prohibitive} in the case of memory-constrained applications such as 3D medical segmentation (where batch sizes of 1 or 2 are common).

\section{Proposed solution}\label{sec:prop_solution}
	\begin{figure*}[t!] 
			\centering
			\includegraphics[width=0.80\linewidth]{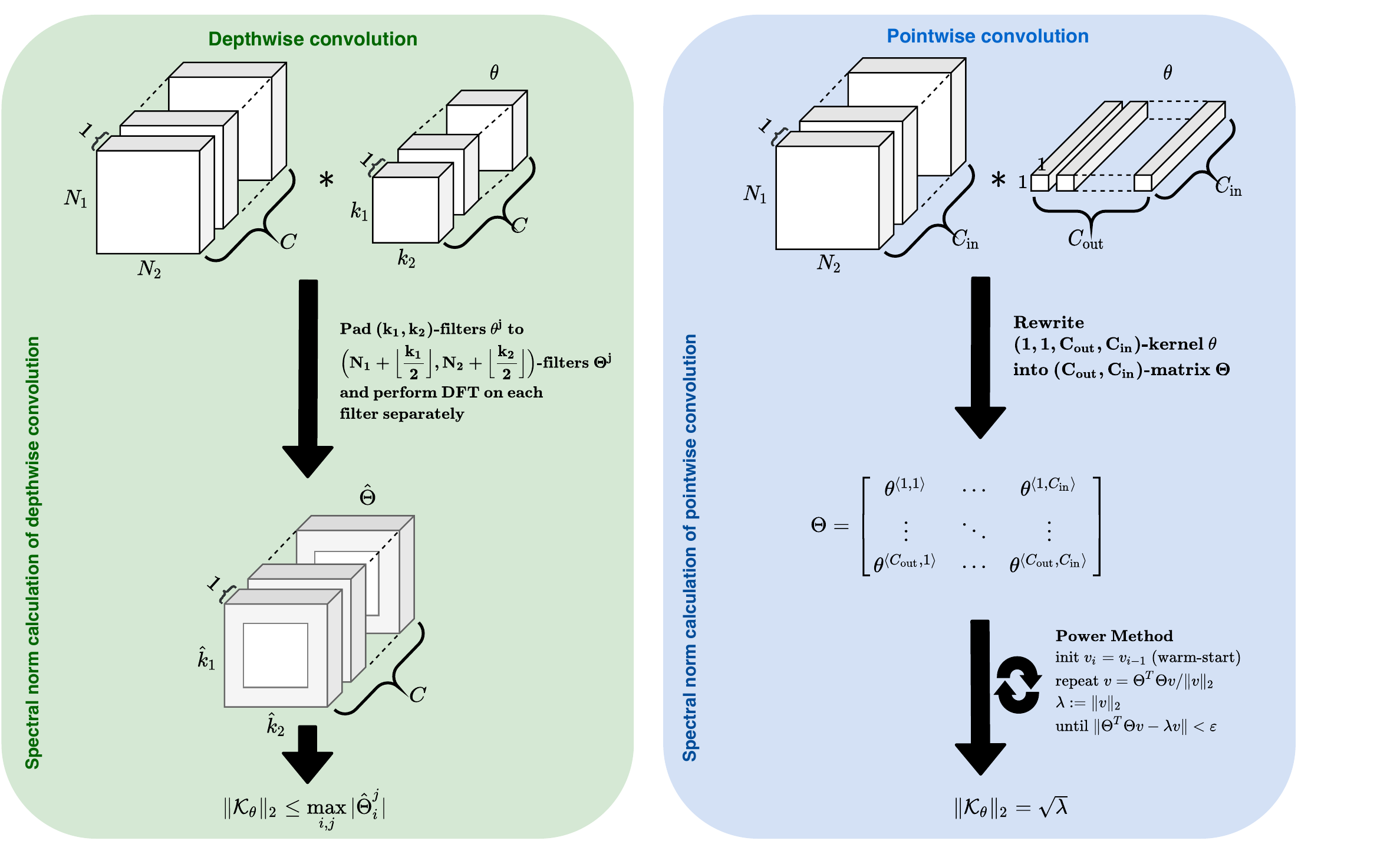}
			\caption{Our proposed approach for spectral normalization of depthwise separable convolutions (in 2D).}
			\label{fig:concept}
	\end{figure*}
	
The main difficulty in determining the spectral norm for multi-channel convolutions lies in the fact that the spatial convolutions and the interplay between different channels are inherently linked. In the case of depthwise separable convolutions however, the spatial and inter-channel computations are \emph{decoupled}. This now also allows for a separate calculation (and thus normalization, cf. section \ref{subsec:spectral_normalization}) of the individual operations' norms, which -- as it turns out -- is much faster, as well as more memory-efficient than the warm-start power method.

\subsection{Spectral norms of depthwise convolutions}\label{subsec:spec_norms_dw_convs}
Depthwise convolutions - the spatial component of depthwise separable convolutions - apply one (usually zero-padded) filtering to each input channel. While common methods for \emph{conventional} multi-channel convolutions are either slow (e.g. a randomly initialized power method or the method from \citep{sedghi2019}), produce non-tight upper bounds \citep{singla2021fantastic, araujo2020} or require a lot of additional memory (warm-start power method), the following method is accurate, fast and can be computed on-the-fly -- but is only valid for depthwise convolutions.

For this, we will make use of a form of the well-known \emph{convolution theorem}\footnote{Since most frameworks actually implement cross-correlation, we derive our formulas using the corresponding \emph{Cross-Correlation Theorem}. The final statement is, however, independent of whether we use cross-correlation or (actual) convolutions.}, which states that taking the discrete Fourier transform (DFT) of a circulant convolution between two signals is equivalent to the product of both signals' DFTs (up to a complex conjugation), such that we only need to search for the largest absolute value of the filters' Fourier coefficients. Furthermore, our derivation will be independent of the dimensionality of the data, i.e. whether we are dealing with 1D, 2D or 3D convolutions. Note that the following theorem has significant overlap with Theorem 5 from \citep{sedghi2019}, but we derive this independently from the dimensionality of the data and provide additional context, why the Fourier transform leads to the correct spectral norm, even in the case of a real-to-real convolution.

\begin{restatable}[Norms of Circulant Convolutions, Single-Channel]{theorem}{NormTheorem} \label{thm:NormTheorem}
    Let $\X=\mathbb{R}^{N_1 \times \cdots \times N_d}$ and $\mathcal{P}=\mathbb{R}^{k_1 \times \cdots \times k_d}$.
    Let $\ConvOp_\Theta: \X \to \X$ be a (single-channel, unit-stride) circulant convolution with filter $\theta \in P$.  Let $\pad: \mathcal{P} \to \X$ denote a zero-padding operator. Then $\|\ConvOp_\theta\|_2 = \max_i  | \hat{\Theta}_i|$, where $\hat{\Theta}=\DFT (\pad(\theta))$ and $\DFT$ denotes the discrete $d$-dimensional discrete Fourier transform. 
\end{restatable}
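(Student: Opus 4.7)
The plan is to prove the theorem in three stages: realize $\ConvOp_\theta$ as a (multi-level) block-circulant matrix, diagonalize it via the $d$-dimensional DFT, and finally reconcile the complex spectrum obtained this way with the fact that we want the spectral norm of a \emph{real} operator.

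First, observe that a single-channel, unit-stride, circulant $d$-dimensional convolution with kernel $\theta$ acts identically to the circulant convolution with $\pad(\theta) \in \X$, because the appended zeros do not contribute to the output. Vectorizing $\X$ turns this operator into a $d$-level block-circulant matrix whose first column is (a reordering of) $\pad(\theta)$. The key fact I would invoke is the $d$-dimensional convolution theorem,
\begin{equation*}
    \DFT(\pad(\theta) \ast x) \;=\; \DFT(\pad(\theta)) \odot \DFT(x) \;=\; \hat{\Theta} \odot \DFT(x),
\end{equation*}
where $\odot$ is the entrywise product. Equivalently, viewed as a complex-linear map on $\mathbb{C}^{N_1 \times \cdots \times N_d}$, we have $\ConvOp_\theta = \DFT^{-1} \diag(\hat{\Theta}) \DFT$. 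Since $\tfrac{1}{\sqrt{N}}\DFT$ is unitary (with $N = N_1 \cdots N_d$), conjugation by $\DFT$ preserves the complex spectral norm, so $\|\ConvOp_\theta\|_{2,\mathbb{C}} = \|\diag(\hat{\Theta})\|_{2,\mathbb{C}} = \max_i |\hat{\Theta}_i|$.

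The main obstacle, and the point the authors explicitly flag as needing independent justification, is that the theorem concerns $\ConvOp_\theta$ as a real-to-real map, while the diagonalization above lives over $\mathbb{C}$ (the eigenvectors $\DFT^{-1} e_i$ are generically complex). I would close this gap by showing $\|\ConvOp_\theta\|_{2,\mathbb{R}} = \|\ConvOp_\theta\|_{2,\mathbb{C}}$ directly: for any $x = u + iv$ with $u,v \in \X$ real, linearity and realness of $\ConvOp_\theta$ give
\begin{equation*}
    \|\ConvOp_\theta x\|_{\mathbb{C}}^{2} \;=\; \|\ConvOp_\theta u\|_{2}^{2} + \|\ConvOp_\theta v\|_{2}^{2} \;\leq\; \|\ConvOp_\theta\|_{2,\mathbb{R}}^{2}\bigl(\|u\|_2^2+\|v\|_2^2\bigr) \;=\; \|\ConvOp_\theta\|_{2,\mathbb{R}}^{2} \|x\|_{\mathbb{C}}^{2},
\end{equation*}
so the complex operator norm is at most the real one; the reverse inequality is trivial because real vectors embed into complex ones. (An alternative route is to note that $\ConvOp_\theta^{T}\ConvOp_\theta$ is a real symmetric PSD matrix, whose largest eigenvalue is identical over $\mathbb{R}$ and $\mathbb{C}$.) Combining this equality with the previous step yields $\|\ConvOp_\theta\|_2 = \max_i |\hat{\Theta}_i|$, as claimed.

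A brief remark at the end will address the footnote: replacing convolution with cross-correlation only conjugates $\hat{\Theta}$ in the diagonalization, leaving $|\hat{\Theta}_i|$ unchanged, so the formula is identical in the cross-correlation convention used by most frameworks.
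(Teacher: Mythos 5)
Your proof is correct and follows the same overall strategy as the paper's: embed the real convolution into a complex one, diagonalize it via the $d$-dimensional DFT, read off $\max_i|\hat{\Theta}_i|$, and then separately argue that the real and complex spectral norms agree. Two sub-steps are handled a little differently, and in both cases your version is arguably more direct. Where you invoke unitary invariance of the spectral norm under conjugation by $\tfrac{1}{\sqrt{N}}\DFT$ to get $\|\ConvOp_\theta\|_{2,\mathbb{C}}=\|\diag(\hat{\Theta})\|_{2,\mathbb{C}}$ in one line, the paper instead observes that $\MatrixConvOp_\theta=F^{-1}\diag(\cdot)F$ with $F^{-1}=\tfrac{1}{N}F^H$ implies $\MatrixConvOp_\theta$ is normal and then appeals to the spectral theorem (singular values of a normal matrix are the moduli of its eigenvalues) --- the two routes are equivalent. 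For the real-versus-complex step, the paper proves a standalone lemma (Appendix Lemma \ref{appdx:matrix_norm_equality}) by orthogonally diagonalizing $\MatrixConvOp^T\MatrixConvOp$; your decomposition $x=u+iv$ with $\|\ConvOp_\theta x\|^2=\|\ConvOp_\theta u\|^2+\|\ConvOp_\theta v\|^2$ reaches the same conclusion more elementarily, and your parenthetical alternative via the real symmetric PSD matrix $\ConvOp_\theta^T\ConvOp_\theta$ is essentially the paper's lemma. Your closing remark on cross-correlation versus convolution (conjugating $\hat{\Theta}$ leaves the moduli unchanged) matches the paper's footnote; the paper simply works with the cross-correlation convention from the start.
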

\begin{proof}
In the following, we will derive an expression for the spectral norm of the complex cross-correlation (Appendix \ref{app:CrossCorrelation}), after which we will show that this is an upper bound for the spectral norm of the real cross-correlation. Let $\Theta=\pad( \theta) \in \mathcal{X}$. Let further $\tilde{\mathcal{X}}=\mathbb{C}^{N_1 \times \cdots \times N_d}$. Let $\tilde{\ConvOp}_\theta: \tilde{X} \to \tilde{X}$ denote the complex cross-correlation, such that the restriction of $\tilde{\ConvOp}_\theta$ to $\X$ describes the same mapping as $\ConvOp_\theta$.
Then, for all $x \in \tilde{\X}$
\begin{equation}
    \begin{aligned}
	    \mathcal{F} \tilde{\ConvOp}_{\theta} x &= \overline{\DFT \Theta} \odot \DFT x,
    \end{aligned}
    \label{eq:proof_fft_spectrum_is_circ_conv_spectrum}
\end{equation}
according to the Cross-Correlation Theorem (Appendix \ref{app:CrossCorrelationTheorem}). Rewriting this in matrix-vector formulation\footnote{For a rigorous derivation, cf. Appendix \ref{app:matrix}.} yields
\begin{equation}
    \begin{aligned}
	    F \MatrixConvOp_\theta \toVec (x)= \diag(\toVec (\overline{\DFT \Theta})) \cdot F\cdot \toVec (x),
	\end{aligned}
\end{equation}
where $\toVec: \tilde{\X} \to \mathbb{C}^{N_1\cdots N_d}$ is a reordering into a column vector and $F,\MatrixConvOp_\theta \in \mathbb{C}^{(N_1\cdots N_d)\times (N_1\cdots N_d)}$ correspond to $\DFT$ and $\tilde{\ConvOp}_\theta$, respectively.
Since this holds for arbitrary $x \in \tilde{\X}$ (and $\toVec$ is orthogonal), this generalizes to $F \MatrixConvOp_\theta = \diag(\toVec(\overline{\DFT \Theta})) \cdot F,$ such that $$\MatrixConvOp_\theta = F^{-1} \cdot \diag (\toVec(\overline{\DFT \Theta})) \cdot F,$$ i.e. the entries of $\toVec(\overline{\DFT \Theta})$ constitute the eigenvalues of $\MatrixConvOp_\theta$. Because $F^{-1}=1/(N_1 \cdots N_d)\cdot F^H$ (where $F^H$ is the Hermitian transpose of $F$), it holds that $\MatrixConvOp^H_\theta \MatrixConvOp_\theta = \MatrixConvOp_\theta \MatrixConvOp_\theta^H$, i.e. $\MatrixConvOp_\theta$ is a normal matrix. According to the spectral theorem, this implies that the singular values of $\MatrixConvOp_\theta$ (and, by isomorphism, of $\tilde{\ConvOp}_\theta$) are the absolute values of the entries $\overline{\DFT \Theta}$, which coincide with the absolute values of the entries of $\hat{\Theta}:=\DFT \Theta$. 

Now, it remains to show that the spectral norms of the real-to-real operator $\ConvOp_\theta$ and the complex-to-complex operator $\tilde{\ConvOp}_\theta$ are the same. As \emph{both} are isometrically isomorphic to  $K_\theta$ (which has only real entries due to the use of a real filter $\theta$), its spectral norm is the same, whether we view it as a real matrix or as a complex matrix (see Appendix Lemma \ref{appdx:matrix_norm_equality}). Hence,
\begin{equation}
    \begin{aligned}
        &\|\ConvOp_\theta\|_2 = \max_{\substack{x\in \mathbb{R}^n \\ \|x\|_2=1}} \| \MatrixConvOp_\theta x\|_2  = \max_{\substack{x\in \mathbb{C}^n \\ \|x\|_2=1}} \| \MatrixConvOp_\theta x\|_2 \\
        =& \|\tilde{\ConvOp}_\theta\|_2 = \max_i | \hat{\Theta}_i|,
    \end{aligned}
\end{equation}
finishing our proof.

\end{proof}

While the above theorem yields exact spectral norms, it has two shortcomings, as it only deals with \emph{circulant} and \emph{single-channel} convolutions. In practice however, depthwise convolutions are typically zero-padded (and multi-channel). As it turns out, by zero-padding the filters to just a bit bigger than the input image (to the size of the padded image), an in practice quite tight upper bound of the actual spectral norm of the depthwise, zero-padded convolution is found.

\begin{restatable}[Norms of Zero-Padded and Multi-Channel Depthwise Convolutions]{theorem}{MultiNormTheorem} 
\label{thm:multi_norm_theorem}
Let $\X=\mathbb{R}^{N_1 \times \cdots \times N_d}$ and $\X^C=\mathbb{R}^{C \times N_1 \times \cdots \times N_d}$.
For odd-dimensional filters $\theta^1, \dots, \theta^C \in \mathcal{P}= \mathbb{R}^{(2p_1+1)\times \cdots \times (2p_d+1)}$ (for $p_i\in \mathbb{N}_0$), let $\theta = (\theta^1, \dots, \theta^C)$. We define
\begin{equation}
    \begin{aligned}
        \ConvOp_\theta: X^C \to & X^C \\
         x^{\langle j \rangle} \mapsto & \kappa_{\theta^j} x^{\langle j \rangle} \text{ for each channel } j
    \end{aligned}
\end{equation}
where $\kappa_{\theta^j}: \X \to \X$ denotes the single-channel zero-padded convolution with filter $\theta^j$. Let $$\pad: \mathcal{P} \to \mathbb{R}^{(N_1+2p_1)\times \cdots \times (N_d+2p_d)}$$ denote the zero-padding. Then, with $\hat{\Theta}^j:=\DFT (\pad(\theta^j))$, it holds that $$\|\ConvOp_\theta\|_2 \leq \max_{i,j}  | \hat{\Theta}^j_i|,$$ where $\DFT$ denotes the discrete $d$-dimensional discrete Fourier transform. 
\end{restatable}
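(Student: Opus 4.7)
The plan is to reduce the multi-channel, zero-padded case to the single-channel circulant case already handled by Theorem \ref{thm:NormTheorem}, by (i) exploiting the block-diagonal channel structure of the depthwise operator and (ii) realizing the zero-padded convolution as a sandwich of isometric/contractive maps around a circulant convolution on an enlarged grid.

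First I would observe that, because $\ConvOp_\theta$ acts independently on each channel, its matrix representation with respect to the canonical decomposition of $\X^C$ into channel subspaces is block-diagonal with blocks $\kappa_{\theta^1},\dots,\kappa_{\theta^C}$. Hence $\|\ConvOp_\theta\|_2 = \max_j \|\kappa_{\theta^j}\|_2$, and it suffices to bound each single-channel, zero-padded convolution.

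Second, for a fixed channel $j$, I would introduce the enlarged space $\tilde{\X} = \mathbb{R}^{(N_1+2p_1)\times\cdots\times(N_d+2p_d)}$ and two auxiliary operators: the zero-padding embedding $E: \X \to \tilde{\X}$ (padding by $p_i$ on each side of axis $i$) and the central-crop $R: \tilde{\X} \to \X$ (restriction to the central $N_1\times\cdots\times N_d$ block). Both have operator norm $1$: $E$ is an isometry, and $R$ is an orthogonal projection. The crucial claim is
\begin{equation*}
\kappa_{\theta^j} \;=\; R \circ \tilde{\kappa}_{\theta^j} \circ E,
\end{equation*}
where $\tilde{\kappa}_{\theta^j}: \tilde{\X} \to \tilde{\X}$ is the circulant convolution on the enlarged grid with filter $\pad(\theta^j)$. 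Submultiplicativity of the spectral norm together with Theorem \ref{thm:NormTheorem} (applied to $\tilde{\kappa}_{\theta^j}$ on $\tilde{\X}$) then yields
\begin{equation*}
\|\kappa_{\theta^j}\|_2 \;\leq\; \|R\|_2\,\|\tilde{\kappa}_{\theta^j}\|_2\,\|E\|_2 \;\leq\; \max_i |\hat{\Theta}^j_i|,
\end{equation*}
and maximizing over $j$ gives the claimed bound.

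The main obstacle is justifying the identity $\kappa_{\theta^j} = R \circ \tilde{\kappa}_{\theta^j} \circ E$, i.e. showing that the cyclic wrap-around inherent to $\tilde{\kappa}_{\theta^j}$ does not contaminate the central block on which $R$ restricts. This is a purely combinatorial check on supports: the filter $\pad(\theta^j)$ has radius $p_i$ along axis $i$, and at any output index belonging to the central block the convolution stencil reaches at most $p_i$ positions away on each side, so it lies entirely within the grid $\{0,\dots,N_i+2p_i-1\}$ without wrapping; moreover the input $Ex$ vanishes on the outer annulus, so the restricted stencil sees exactly the same values as the zero-padded valid convolution defining $\kappa_{\theta^j}$. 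I would state this as a short lemma (axis-by-axis, with a brief verification of the index ranges), and then everything above assembles into the inequality of the theorem. No further Fourier manipulation is needed beyond what Theorem \ref{thm:NormTheorem} already supplies, so the proof is essentially a structural reduction.
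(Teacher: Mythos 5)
Your argument is correct and takes essentially the same route as the paper: the paper likewise reduces to the single-channel case via the block-diagonal channel structure (the singular values of the depthwise operator are the union of those of the blocks) and then dominates the zero-padded convolution by the circulant convolution on the enlarged grid $\mathbb{R}^{(N_1+2p_1)\times\cdots\times(N_d+2p_d)}$ (its Lemma \ref{lem:PaddingLemma}), to which Theorem \ref{thm:NormTheorem} is applied. Your explicit factorization $\kappa_{\theta^j}=R\circ\tilde{\kappa}_{\theta^j}\circ E$ with $\|R\|_2=\|E\|_2=1$ is merely a slightly more streamlined phrasing of that lemma, which the paper instead proves by chaining the zero-padded convolution through valid convolutions and a circulant padding.
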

The proof is found in Appendix \ref{appdx:depthwise_proof}. Our proposed method based on this Theorem is found in Figure \ref{fig:concept}.

Since there are very fast and highly parallelizable algorithms for computing the DFT (the \emph{Fast Fourier Transform}), the above computation can be performed very quickly. 

Note that the above theorems are derived for unit-stride depthwise convolutions. For strided convolutions, the upper bound still holds, but it will in reality be more vacuous. To see this, let $\mathcal{S}$ denote the strided convolution and let $\ConvOp$ denote the unit-strided convolution. Then there is a subsampling operator $P$, such that $\mathcal{S}=P \ConvOp$. Due to $\|P\|_2 = 1$, it holds that $\|\mathcal{S}\|\leq \|P\|_2 \|\ConvOp\|_2 = \|\ConvOp\|_2 $.  In the case of stride $\nu >1$ in all directions, one can only make the 'educated guess' based on the assumption that all components of $\ConvOp x$ contribute to its overall norm evenly, in which case $\nu^{d}\|P\ConvOp x\|^2_2 \approx \|\ConvOp x\|^2_2$. This e.g. means that for 2D data, our estimation is about 2 times too high for $\nu=2$. 
\subsection{Calculating Lipschitz constants for pointwise convolutions}\label{subsec:lip_const_pw_convs}
As discussed in section \ref{subsec:dw_conv}, pointwise convolutions are simply multi-channel convolutions
$$y^{\langle i \rangle}=\textstyle{\sum_j} \theta^{\langle i, j \rangle} \ast x^{\langle j \rangle}$$
with $\theta^{\langle i, j \rangle} \in \mathbb{R}$, which is why in 2D they are often called $(1 \times 1)$-convolutions. Here, we propose a variation of the warm-start power method for a specific matrix, which -- compared to the na\"{i}ve power method for the full operator -- is typically much faster and memory-efficient. In particular, it is independent of the size of the convolved data.

\begin{theorem}
Let $\ConvOp_\theta$ be a pointwise convolution with kernel $\theta$, such that $\theta^{\langle i,j \rangle} \in \mathbb{R}$ for all $i \in \{1,\dots,C_\text{out}\}$ and $j \in \{1,\dots,C_\text{in}\}$, where $C_\text{in}$ and $C_\text{out}$ denote the number of input respectively output channels. Then we call 
\begin{equation}
    \Theta :=     
    \begin{bmatrix}
        \theta^{\langle 1,1 \rangle} & \cdots & \theta^{\langle 1,C_\text{in} \rangle} \\
        \vdots & \ddots & \vdots \\
        \theta^{\langle C_\text{out},1 \rangle} & \cdots & \theta^{\langle C_\text{out}, C_\text{in} \rangle} 
    \end{bmatrix} \in \mathbb{R}^{C_\text{out} \times C_\text{in}},
\end{equation}
the \emph{connectivity matrix} of $\ConvOp_\theta$ and it holds that $$\|\ConvOp_\theta\|_2 = \|\Theta\|_2.$$
\end{theorem}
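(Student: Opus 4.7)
The plan is to exploit the fact that a pointwise (unit-stride) convolution acts independently and identically at every spatial location, so the operator has a Kronecker-product structure. Writing the input $x \in \mathbb{R}^{C_\text{in}\times N_1 \times \cdots \times N_d}$ and letting $x_n \in \mathbb{R}^{C_\text{in}}$ denote the channel vector at spatial index $n=(n_1,\dots,n_d)$, the defining formula $y^{\langle i\rangle}_n = \sum_j \theta^{\langle i,j\rangle} x^{\langle j\rangle}_n$ reads literally as $(\ConvOp_\theta x)_n = \Theta x_n$. Up to the reshape $\toVec$, this identifies $\ConvOp_\theta$ with $\Theta \otimes I_{N_1\cdots N_d}$, whose spectral norm is a standard fact equal to $\|\Theta\|_2$. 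I would, however, prove the equality directly from $(\ConvOp_\theta x)_n = \Theta x_n$ by a two-sided sandwich.

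For the upper bound, I would use the Parseval-style identity $\|x\|_2^2 = \sum_n \|x_n\|_2^2$ (and likewise for the output) and apply the pointwise factorization to obtain
\begin{equation*}
\|\ConvOp_\theta x\|_2^2 \;=\; \sum_n \|\Theta x_n\|_2^2 \;\leq\; \|\Theta\|_2^2 \sum_n \|x_n\|_2^2 \;=\; \|\Theta\|_2^2 \, \|x\|_2^2,
\end{equation*}
which gives $\|\ConvOp_\theta\|_2 \leq \|\Theta\|_2$. For the matching lower bound, let $v \in \mathbb{R}^{C_\text{in}}$ be a unit right-singular vector of $\Theta$ with $\|\Theta v\|_2 = \|\Theta\|_2$. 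I would construct a test input $\tilde x$ that places $v$ at a single chosen spatial position $n^\star$ and is zero elsewhere. Then $\|\tilde x\|_2 = 1$ and $(\ConvOp_\theta \tilde x)_{n^\star} = \Theta v$, with all other spatial slices zero, so $\|\ConvOp_\theta \tilde x\|_2 = \|\Theta v\|_2 = \|\Theta\|_2$, giving $\|\ConvOp_\theta\|_2 \geq \|\Theta\|_2$.

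There is no deep obstacle: the argument is essentially a one-line block-diagonal computation plus an explicit maximizer. The only bookkeeping subtlety is choosing the reshape $\toVec$ consistently so that the identification $\ConvOp_\theta \cong \Theta \otimes I$ holds as stated rather than up to a permutation of indices — but since permutations are orthogonal and preserve spectral norms, even that caveat is automatic and does not affect the final equality.
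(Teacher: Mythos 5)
Your proof is correct and rests on the same key observation as the paper's: a unit-stride pointwise convolution decouples across spatial positions, making $\ConvOp_\theta$ (up to reshaping) block diagonal with identical blocks $\Theta$. The only difference is cosmetic — the paper cites the fact that the singular values of a block-diagonal matrix are the union of the blocks' singular values, whereas you verify the resulting norm equality directly via the upper bound $\sum_n\|\Theta x_n\|_2^2\le\|\Theta\|_2^2\|x\|_2^2$ plus an explicit maximizer supported at a single spatial location, which is a perfectly valid and self-contained way to finish.
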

\begin{proof}
$\ConvOp_\theta$ is similar to a block diagonal matrix with blocks $\Theta$. As the singular values of block diagonal matrices are the union of the blocks' singular values, the spectral norm (i.e. the largest singular value) of $\ConvOp_\theta$ coincides with the spectral norm of $\Theta$.
\end{proof}

Performing the power method with the connectivity matrix $\Theta$ has the additional advantage, that the warm-start version of the power method requires only storing a small right-singular vector\footnote{Alternatively, the left-singular vector from $\mathbb{R}^{C_\text{out}}$ can be stored.} from $\mathbb{R}^{C_\text{in}}$. Compare that to the na\"{i}ve power method, which requires storing a vector of the size of the input (respectively output) tensor of the layer. We refer to the thus-improved warm-start power method as the \textbf{efficient warm-start power method}. An illustration of this method is found in Figure \ref{fig:concept}.

For a square images of resolution $N$-by-$N$, both the computational and memory demand is divided by a factor of $N^2$. This effect is even more pronounced in case of 3D (or even higher-order) data, where the memory cost for the na\"{i}ve method can easily be several gigabytes, whereas our method is typically limited to kilobytes of storage for storing the leading singular vector (for at most a few thousand channels).

\section{Experiments}\label{sec:experiments}
In this section, we perform several experiments on classical benchmarks in order to evaluate our proposed method for spectral normalization. Note that the goal here is not to convince the reader of the usefulness of spectral normalization in general (interested readers are referred to the many examples in sections \ref{sec:introduction} and \ref{subsec:specnorm_for_conv_layers}), but to benchmark the accuracy and speed of our method and to uncover the interplay with other variables, such as the learning rate.

\subsection{Accuracy of depthwise spectral norm upper bound}\label{subsec:acc_dw_conv_bound}
While the accuracy of our efficient power method for \emph{pointwise} convolutions is governed by the user-specified $\varepsilon$-parameter, for depthwise convolutions we only have access to upper bounds of the true spectral norm. As shown in Theorem \ref{thm:multi_norm_theorem}, the upper bound comes into play solely because of zero-padding, whose \emph{relative} effect should become smaller for large resolutions of the input images (in comparison to small kernel sizes). In order to check how tight this upper bound is (and thus how close our approximation is), we calculate the relative overestimation (approximation/actual value) for randomly generated (Gaussian distributed) $3 \times 3$ filters, depending on the feature resolution. Here, the 'true' spectral norm was approximated with 30 iterations of the power method. We varied the feature resolution from $7 \times 7$ (the smallest resolution in e.g. MobileNetV2  \cite{sandler2018mobilenetv2}, VGG \cite{simonyan15vgg} and ResNet \cite{he2016deep}) to 128 and generated 1000 random filters per resolution. The results are shown in Figure \ref{fig:random_kernel_acc}. One can observe that the relative error is indeed decreasing with an increase in the resolution, from (in median) 17\% to 2\%.
Note that due to the linearity of the investigated methods, the results of this experiment are independent of the standard deviation of the chosen Gaussian distribution.
 
The above analysis can be seen as an investigation into the approximation accuracy at initialization. So how good is the approximation in a fully-trained network, where the filters do not follow a Gaussian distribution? For this, we did the above comparison for a MobileNetV2, which was pretrained on ImageNet \cite{russakovsky2015imagenet}. MobileNetV2 contains 17 depthwise convolutional layers, 13 of which are unit-stride and 4 of which have a stride of 2 in each direction. The correct feature resolution from the original article was used for the calculation of the spectral norms. For the 13 unit-stride convolutions, the average relative overestimation was 2.80\%, with a standard deviation of 1.55\%. In the case of the strided convolutions (for which we expect more vacuous bounds, as explained in section \ref{subsec:spec_norms_dw_convs}), the relative overestimation of our method was 75.6\%
(standard deviation: 20.6\%). In summary, in a realistic setting, our estimation of the spectral norm is highly accurate for unit-stride depthwise convolutions, but less accurate for strided convolutions (but still better than the 'educated guess' of a factor of 2, which we laid out in section \ref{subsec:spec_norms_dw_convs}). 
\begin{figure}[h]
    \centering
    \includegraphics[width=.45\textwidth]{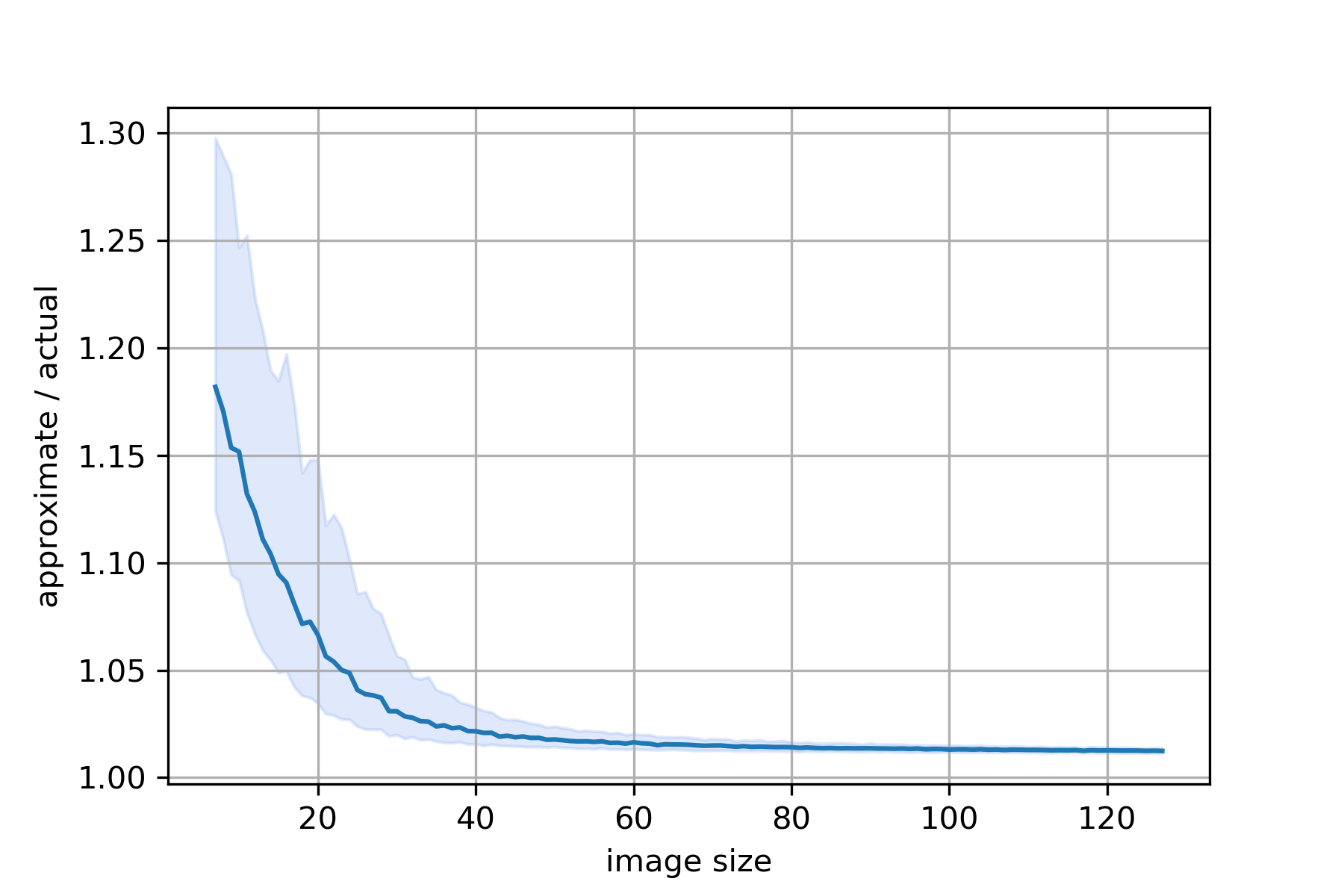}
    \caption{Relative overestimation of our DFT-based spectral norm upper bound for standard Gaussian kernels and varying image size. The dark blue line denotes the median over 1000 random kernels, while the shaded region shows the interquartile range.}
    \label{fig:random_kernel_acc}
\end{figure}

\subsection{Lipschitz constant and learning rate}\label{subsec:lip_const_lr}
Since the Lipschitz constant of a neural network determines the magnitude of its gradients, training with gradient-based methods requires adapting the learning rate accordingly -- i.e. a lower Lipschitz constant necessitates a larger learning rate.
In order to evaluate the interplay of the learning rate and scaling constant $K$ (both for soft and hard scaling), we perform a study on CIFAR-10 \cite{krizhevsky2009learning}, in which we test different settings for both variables. For this, we train a ResNet-34 \citep{he2016deep}, for which we exchanged all convolutions by depthwise separable convolutions (one depthwise convolution, followed by a pointwise convolution without intermediate activation function). We therefore end up with a network architecture consisting of 33 depthwise separable convolutional layers (of which 3 are strided convolutions with a stride of 2) and one fully connected (softmax) layer. We trained all networks for 300 epochs, using scaling constants from 1 to 10 and learning rates $10^{-5}, 10^{-4}, 10^{-3}, 10^{-2} \text{ and } 10^{-1}$ for stochastic gradient descent with a fixed momentum of $0.9$. Since batch normalization influences the Lipschitz constant (both through the running means of the batches' standard deviations and the $\gamma$-parameter), we train our models without batch normalization.
We compute and restrict the spectral norm of every layer, using the proposed method to spectrally normalize the depthwise separable convolutions, as well as the warm-start power method to restrict the spectral norm of the final classification layer.  Additionally, we choose the precision parameter $\varepsilon$ to be 0.01 for all pointwise convolutional layers and initialize the soft scaling parameter as $s=3$ so that $K \cdot \tanh(s)\approx K$ at initialization. 

The results of this experiment are depicted in Table \ref{tab:lr_ablation_study}. As predicted, lower scaling constants $K$ tend to require higher learning rates. Furthermore, the known regularizing effect of the Lipschitz constraints can be observed, since increasing $K$ initially increases the prediction accuracy (underfitting regime), before it decreases again (overfitting regime). A careful tuning of the learning rate is needed to achieve the best performance. In particular, too high of a learning rate may collapse the training. Moreover, there is no clear winner between hard and soft scaling.

\begin{table*}
	\centering
	\caption{Best learning rate per scaling constant $K$: Test accuracy (in percent) for a ResNet-34 with depthwise separable convolutions trained on the CIFAR-10 dataset. Accuracies for hard scaling (H) and soft scaling (S) are compared for learning rates from 1e-5 to 1e-1. It can be seen that  the optimal learning rate decreases with increasing scaling constant.}
	\begin{tabular}{c c c c c c c c c c c}
		\toprule
		& \multicolumn{10}{c}{\textbf{Learning rate}}\\ %
		& \multicolumn{2}{c}{\textbf{1e-5}} & \multicolumn{2}{c}{\textbf{1e-4}} & \multicolumn{2}{c}{\textbf{1e-3}} & \multicolumn{2}{c}{\textbf{1e-2}} & \multicolumn{2}{c}{\textbf{1e-1}} \\ %
		\textbf{Scaling constant} $\mathbf{K}$ & H & S & H & S & H & S & H & S & H & S \\ \midrule%
		1 &\ 15.38 &\  14.98 &\ \textbf{16.64} &\ 15.56 &\ 16.34 &\  16.28 &\ 10.00 &\ 10.00  &\ 10.00 &\ 10.00 \\  
		2 &\ 17.23 &\ 17.05 &\  19.93 &\ 19.00 &\ 45.01 &\ 42.17 &\ 54.70 &\ \textbf{55.65} &\ 10.00 &\ 10.00 \\  
		3 &\ 18.02 &\ 17.48 &\  45.88 &\ 42.08 &\ 67.37  &\ 72.51 &\ \textbf{74.73} &\ 72.33 &\ 10.00 &\ 10.00 \\  %
		4 &\ 35.45 &\ 35.34 &\ 62.90 &\ 62.73 &\ 81.07 &\ 80.94 &\ 84.71 &\ \textbf{85.65} &\ 10.00 &\ 10.00 \\  %
		5 &\ 50.68 &\ 50.36 &\ 74.85 &\ 74.73 &\ 86.68 &\ 86.72 &\ 90.53 &\ \textbf{90.74} &\ 10.00 &\ 10.00 \\  %
		6 &\ 59.73 &\ 59.65 &\ 80.45 &\ 80.19 &\  \textbf{88.60} &\ 87.63 &\ 10.00 &\ 10.00 &\ 10.00 &\ 10.00 \\  %
		7 &\ 62.10 &\ 62.39 &\ 81.10 &\ 81.06 &\ 85.24 &\ \textbf{85.36} &\ 10.00 &\ 10.00 &\  10.00 &\ 10.00 \\  %
		8 &\ 61.53 &\ 62.74 &\ \textbf{81.21} &\ 81.17 &\ 80.16 &\ 10.00 &\  10.00 &\ 10.00 &\  10.00 &\ 10.00 \\  %
		9 &\ 45.52 &\ \textbf{52.25} &\ 32.69 &\ 41.92 &\ 10.00 &\ 10.00 &\ 10.00 &\ 10.00 &\ 10.00  &\ 10.00 \\  %
		10 &\ 10.00 &\ 10.00 &\ 10.00 &\ 10.00 &\ 10.00 &\ 10.00 &\ 10.00  &\ 10.00 &\ 10.00  &\ 10.00 \\ \bottomrule%
	\end{tabular}
	\label{tab:lr_ablation_study}
\end{table*}
    
\subsection{Spectral normalization on ImageNet}\label{subsec:spec_norm_imagenet}
In addition to the previous CIFAR-10 experiments, we conducted a study on the much higher-resolution ImageNet dataset \citep{russakovsky2015imagenet}. For this, we trained MobileNetV2 \citep{sandler2018mobilenetv2}, a standard architecture for image classification using depthwise separable convolutions, using $K \in \{10, 12, 15, 20, 30, 40\}$, and again without batch normalization. Lower and higher values for $K$ in our case lead a strong breakdown in performance. The models were trained with soft scaling for 150 epochs, using the precision parameter $\varepsilon = 0.01$ for every pointwise convolutional layer. All models were trained with a batch size of 128, divided on two NVIDIA V100 GPUs. Due to the interplay of scaling constant and learning rate (see subsection \ref{subsec:lip_const_lr}), the learning rate had to be adapted for each $K$.

In Figure \ref{fig:imagenet_acc} the Top-1 as well as the Top-5 validation accuracy per scaling constant $K$ are plotted. One can observe that the accuracy increases as one increases the scaling constant, indicating that we never reach the overfitting regime.
Moreover, it can be seen that the increase in accuracy is steeper in the lower regions of $K$, later leveling off.

The use of soft scaling offers an additional degree of flexibility, as the training can slightly adjust the Lipschitz constant. We noticed that in the ImageNet experiments, the initialization of the learnable parameter $s$ had to be tuned. For lower values of $K$ (up to $K=20$), the last initialization from the previous CIFAR-10 experiments ($s=3$, resulting in $\tanh(3) \approx 0.995$) still worked very well, whereas higher values of $K$ required reducing the initial value of $s$ somewhat (to  $s=0.5$ so that $\tanh(s) \approx 0.46$), otherwise resulting in exploding gradients or missing convergence. We also for instance experimented with $s=0$ or $s=\arctanh (1/K)$ (which initializes the network as if there was no spectral normalization at initialization), but both did not generally work well. However, in all cases it was enough to monitor the training for a few gradient steps to judge whether an adjustment of the initial value for $s$ was necessary.

\begin{figure}[h]
    \centering
    \includegraphics[width=\linewidth]{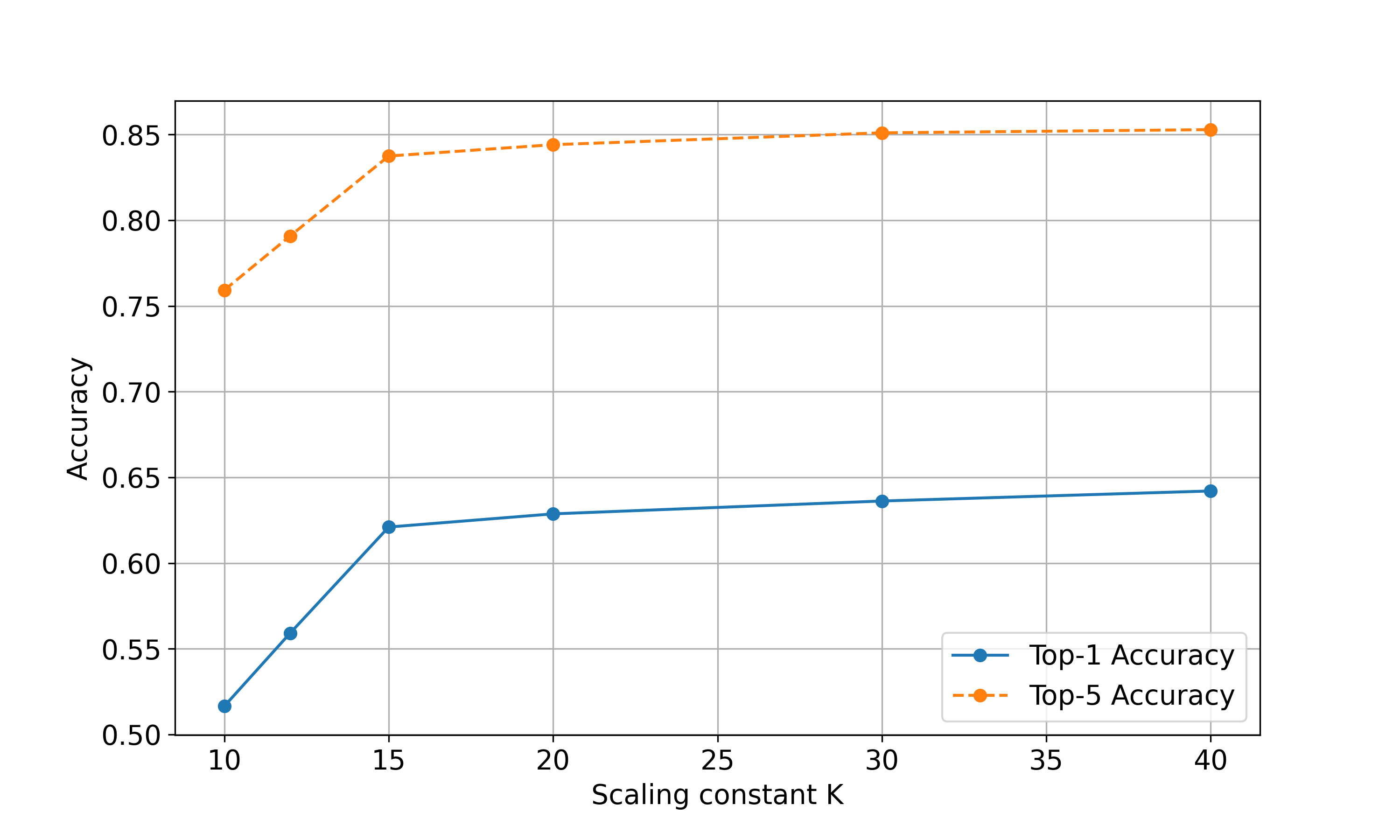}
    \caption{ImageNet validation accuracy for a MobileNetV2 with spectral normalized layers for scaling constants from 10 to 40. Expectedly, loosening the upper bound per layer leads to increasing accuracies.}
    \label{fig:imagenet_acc}
\end{figure}

\subsection{Time complexity}\label{subsec:time_complexity}
In the following, we aim to show the impact of our method on training times, which we measured both with and without spectral normalization. Instead of benchmarking only linear layers, this includes the time for data loading, augmentation, backpropagation etc, which allows for a realistic assessment of the method's overhead in a real training scenario. In addition to the absolute times per epoch, we show the relative factor between our approach and training the network without spectral normalization.

For this, we observe the average training times per epoch of the last two experimental cohorts (ResNet-34 on CIFAR-10 and MobileNetV2 on ImageNet, best accuracy model each), and compare these to the case of the same network \emph{without spectral normalization} (and still without batch normalization). The results are summarized in Table \ref{tab:time_per_epoch}. 

In the case of CIFAR-10, our additional overhead amounted to an increase of about 63\% (where a considerable amount was spent on the normalization of the final, fully-connected layer). In the case of ImageNet, the time per epoch only increased by 2\%, which can be seen as negligible. Since the cost of our normalization is small in comparison to convolutions on large features, it is to be expected that our method has less of an impact on the training time for higher-dimensional data.

\begin{table}[h]
    \vspace{15pt}
	\centering
	\caption{Time per epoch for CIFAR-10 classification with a ResNet-34 with depthwise separable convolutions and ImageNet classification with a MobileNetV2 architecture.}
	\begin{tabular}{c c c c}
	\toprule
	    & \multicolumn{2}{c}{\textbf{Time per epoch}} & \\%
	 \textbf{} & \textbf{SpecNorm} & \textbf{no SpecNorm} & \textbf{Relative} \\\midrule%
	 CIFAR-10 & 59.59\ s & 36.63\ s & 1.63\\%
	 ImageNet & 3074\ s & 3016\ s & 1.02\\%
	\end{tabular} \newline
	\label{tab:time_per_epoch}
\end{table}

\section{Conclusion \& future work}\label{sec:conclusion}
Mathematical guarantees in the form of Lipschitz constants have increasingly come into the focus of modern neural networks research. While there are simple and generic ways of calculating (and thus enforcing) spectral norms (resulting in said Lipschitz guarantees), these are typically quite costly in time or memory, or result in inaccurate upper bounds. For application for which efficiency is of interest, depthwise separable convolutions have emerged as a fast form of convolutions, which nonetheless allow for state-of-the-art results in challenging benchmarks (cf. \citet{pham2020meta}). In this work, introduce a very simple procedure, with which the spectral normalization of depthwise separable convolutions can be performed with negligible computational and memory overhead, while being quite accurate in practice.

We imagine future work based on our proposed method both from an application as well as a methodological perspective. From a methodological standpoint, going from the spectral norm to different norms is another possible avenue. In the case of the pointwise convolution, this is straightfoward, as this just entails computing matrix norms (which are often fast to compute, e.g. in the case of the 1-norm or $\infty$-norm). 
In terms of applications, research into architectures that make full use of depthwise separable convolutions for any of the applications named in section \ref{sec:introduction} is needed (e.g. spectrally normalized GANs), which can then benefit from our proposed method. Making the step to 3D will yield even bigger performance improvements compared to the conventional approaches. One use case is the stabilization of invertible networks for memory-efficient 3D segmentation \citep{etmann2020iunets}, which can be achieved by limiting the layer's Lipschitz constant \citep{behrmann2020}, e.g. via our method. 
\section*{Acknowledgements}
CE acknowledges support from the Wellcome Innovator Award RG98755.
CBS acknowledges support from the Philip Leverhulme Prize, the Royal Society Wolfson Fellowship, the EPSRC grants EP/S026045/1 and EP/T003553/1, EP/N014588/1, EP/T017961/1, the Wellcome Innovator Award RG98755, European Union Horizon 2020 research and innovation programme under the Marie Skodowska-Curie grant agreement No. 777826 NoMADS, the Cantab Capital Institute for the Mathematics of Information and the Alan Turing Institute.
\bibliography{literature}
\bibliographystyle{icml2021}
\onecolumn
\appendix
{\Large {Appendix}}
\section{Additional Details and Proofs} 
In the following, we will give additional details and proofs for the theory in Section \ref{sec:prop_solution}. For compactness, we will use a multi-index notation indicated by bold letters and numbers. All multi-indices have $d$ entries, meant to represent $d$-dimensional data (e.g. $d=2$ for images), i.e. $\mathbf{n}=(n_1,\dots,n_d)$. We will also use the shorthand $\mathbb{R}^\mathbf{N}:=\mathbb{R}^{N_1 \times \cdots \times N_d}$, respectively $\mathbb{C}^\mathbf{N}:=\mathbb{C}^{N_1 \times \cdots \times N_d}$. For some $f \in \mathbb{R}^\mathbf{N}$ (respectively $f \in \mathbb{C}^\mathbf{N}$), we write $f[\mathbf{n}]:=f[n_1,\dots,n_d]:=f_{n_1,\dots,n_d}$. Division and addition are meant entry-wise. Furthermore, we define the sum notations
$$\sum_{\mathbf{n=0}}^{\mathbf{N-1}} f[\mathbf{n}] := \sum_{n_1=0}^{N_1-1} \cdots \sum_{n_d=0}^{N_d-1} f[n_1, \dots, n_d]$$
and
$\langle \mathbf{a}, \mathbf{b} \rangle := a_1b_1 + \cdots + a_d b_d$. We denote the imaginary unit by $i$, as well as the complex conjugate of a complex vector $f$ by $\overline{f}$.

\subsection{Discrete Fourier Transform} \label{app:DFT}
Recall the following definition of the (unnormalized) discrete Fourier transform (DFT). 
\begin{definition} We call
\begin{equation}
    \begin{aligned}
        \DFT : \mathbb{C}^\mathbf{N} \to& \phantom{.} \mathbb{C}^\mathbf{N} \\
         f \mapsto& \phantom{.} \hat{f},
    \end{aligned}
\end{equation}
defined by
\begin{equation}
    \begin{aligned}
        \hat{f}[\mathbf{j}] := \sum_{\mathbf{n=0}}^{\mathbf{N-1}} f[\mathbf{n}] e^{-i 2\pi \mathbf{ \langle n/N , j \rangle}}
    \end{aligned}
\end{equation}
the discrete Fourier transform.
\end{definition}
Compared to the 'normalized' version of the discrete Fourier transform (which is multiplied by $1/\sqrt{N_1 \cdots N_d}$ and is unitary), this unnormalized formulation of the DFT results in slightly simpler statements for the main results.

\subsection{Circulant Cross-Correlation} \label{app:CrossCorrelation}
Here, we will define the complex version of the cross-correlation operation. Note that virtually all neural network libraries implement convolutions as cross-correlation.
\begin{definition}
For $\Theta,f \in \mathbb{C}^\mathbf{N}$, we call the vector $g \in \mathbb{C}^\mathbf{N}$ defined by
\begin{equation}
    \begin{aligned}
        g[\mathbf{n}] = \sum_{\mathbf{k=0}}^{\mathbf{N-1}} \overline{\Theta[\mathbf{n}]} f[\mathbf{n+k}]
    \end{aligned}
\end{equation}
the \emph{circulant cross-correlation of $\Theta$ with $f$}, if $f$ is circularly continued, i.e. $f[\dots, n_j + mN_j, \dots]=f[\dots, n_j, \dots]$ for all $n_j \in \{0,\dots,N_j-1\}$ for $j \in \{1,\dots,d\}$ and $m\in \mathbb{Z}$. We denote this cross-correlation by $\Theta \ast f:=g$.
\end{definition}

\subsection{Circulant Cross-Correlation Theorem} \label{app:CrossCorrelationTheorem}
\begin{theorem} Let $g = \Theta \ast f$ and $\hat{g}=\DFT g$. Then 
$$\hat{g}[\mathbf{j}] = \overline{(\mathcal{F} \Theta)[\mathbf{j}] } \cdot (\mathcal{F}f)[\mathbf{j}],$$
or, written differently, $$\DFT (\Theta \ast f) = \overline{(\DFT \Theta)} \odot (\DFT f),$$
where $\odot$ denotes the component-wise multiplication.
\end{theorem}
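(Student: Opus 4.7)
The plan is a direct calculation that substitutes the definition of the DFT into $\hat g[\mathbf j]$, then swaps the order of summation, then performs a shift of index that exploits the circulant (periodic) continuation of $f$. I would first write
\begin{equation*}
    \hat g[\mathbf j] = \sum_{\mathbf n=\mathbf 0}^{\mathbf N-\mathbf 1} g[\mathbf n]\, e^{-i2\pi\langle \mathbf n/\mathbf N,\mathbf j\rangle} = \sum_{\mathbf n=\mathbf 0}^{\mathbf N-\mathbf 1}\sum_{\mathbf k=\mathbf 0}^{\mathbf N-\mathbf 1} \overline{\Theta[\mathbf k]}\, f[\mathbf n+\mathbf k]\, e^{-i2\pi\langle \mathbf n/\mathbf N,\mathbf j\rangle},
\end{equation*}
(treating the apparent index in the definition of $\Theta \ast f$ as $\mathbf k$, which is forced by the shape of the sum), and then swap the order of summation so that $\mathbf k$ is the outer index.

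Next I would perform the substitution $\mathbf m = \mathbf n + \mathbf k$ in the inner sum. The key step is that although naively this shifts the summation range to $\mathbf k \le \mathbf m < \mathbf N + \mathbf k$ component-wise, both $f$ (by hypothesis) and the complex exponential $e^{-i2\pi\langle \mathbf m/\mathbf N,\mathbf j\rangle}$ (because $e^{-i2\pi j_\ell}=1$ for integer $j_\ell$) are periodic with period $N_\ell$ in each coordinate, so the sum over the shifted block equals the sum over $\mathbf 0 \le \mathbf m < \mathbf N$. Writing $e^{-i2\pi\langle (\mathbf m-\mathbf k)/\mathbf N,\mathbf j\rangle} = e^{-i2\pi\langle \mathbf m/\mathbf N,\mathbf j\rangle}\, e^{i2\pi\langle \mathbf k/\mathbf N,\mathbf j\rangle}$ and pulling the $\mathbf k$-dependent factor outside the inner sum yields
\begin{equation*}
    \hat g[\mathbf j] = \left(\sum_{\mathbf k=\mathbf 0}^{\mathbf N-\mathbf 1}\overline{\Theta[\mathbf k]}\, e^{i2\pi\langle \mathbf k/\mathbf N,\mathbf j\rangle}\right)\cdot\left(\sum_{\mathbf m=\mathbf 0}^{\mathbf N-\mathbf 1} f[\mathbf m]\, e^{-i2\pi\langle \mathbf m/\mathbf N,\mathbf j\rangle}\right).
\end{equation*}

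Finally I would identify the second factor as $\hat f[\mathbf j]$ by definition and recognise the first factor as the complex conjugate of $\hat\Theta[\mathbf j]$, using $\overline{e^{-i\alpha}}=e^{i\alpha}$ and the fact that conjugation commutes with a finite sum. This yields $\hat g[\mathbf j] = \overline{\hat\Theta[\mathbf j]}\cdot \hat f[\mathbf j]$, which is the claimed componentwise identity, and the vector form $\DFT(\Theta\ast f) = \overline{\DFT\Theta}\odot\DFT f$ follows immediately.

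The only nontrivial step is the second one: the reindexing $\mathbf m=\mathbf n+\mathbf k$ requires the circulant continuation of $f$ to make the shifted sum equal the original-range sum, and it requires noting that the Fourier kernel is also periodic on the integer lattice so no correction term appears. Everything else is routine algebra and the definition of $\DFT$.
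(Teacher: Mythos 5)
Your proposal is correct and follows essentially the same route as the paper's proof: expand the DFT of the cross-correlation, swap the summation order, and use the circulant continuation of $f$ together with the periodicity of the Fourier kernel to reindex the inner sum, after which the two factors are identified as $\overline{(\DFT\Theta)[\mathbf j]}$ and $(\DFT f)[\mathbf j]$. The only cosmetic difference is that you explicitly substitute $\mathbf m=\mathbf n+\mathbf k$ whereas the paper writes $\mathbf n=(\mathbf n+\mathbf k)-\mathbf k$ and keeps the original summation variable; you also correctly read the index in the definition of the cross-correlation as $\mathbf k$ (the paper's definition contains a typo there).
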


\begin{proof}
\allowdisplaybreaks
    \begin{align*}
    \hat{g}[\mathbf{j}] = & \sum_{\mathbf{n=0}}^{\mathbf{N-1}} g[\mathbf{n}] e^{-i 2\pi \langle \mathbf{n/N, j} \rangle} \\
        = & \sum_{\mathbf{n=0}}^{\mathbf{N-1}}  \sum_{\mathbf{k=0}}^{\mathbf{N-1}} \overline{\Theta[\mathbf{k}]} f[\mathbf{n+k}] e^{-i 2\pi \langle \mathbf{n/N, j} \rangle} \\ 
        = & \sum_{\mathbf{n=0}}^{\mathbf{N-1}}  \sum_{\mathbf{l=0}}^{\mathbf{N-1}} \overline{\Theta[\mathbf{k}]} f[\mathbf{n+k}] e^{-i 2\pi \langle \mathbf{(n+k-k)/N, j} \rangle} \\
        = & \sum_{\mathbf{n=0}}^{\mathbf{N-1}}  \sum_{\mathbf{l=0}}^{\mathbf{N-1}} \overline{\Theta[\mathbf{k}]} f[\mathbf{n+k}] e^{-i 2\pi \langle \mathbf{(n+k)/N, j} \rangle} e^{-i 2\pi \langle \mathbf{-k/N, j} \rangle} \\
        = & \sum_{\mathbf{k=0}}^{\mathbf{N-1}} \overline{\Theta[\mathbf{k}]}  e^{-i 2\pi \langle \mathbf{-k/N, j} \rangle} \sum_{\mathbf{n=0}}^{\mathbf{N-1}} f[\mathbf{n+k}] e^{-i 2\pi \langle \mathbf{(n+k)/N, j} \rangle} 
    \end{align*}

Using the fact that $\exp (-ix)=\overline{\exp (ix)}$ for $x\in \mathbb{R}$:
\begin{equation*}
    \begin{aligned}
        \sum_{\mathbf{k=0}}^{\mathbf{N-1}} \overline{\Theta[\mathbf{k}]}  e^{-i 2\pi \langle \mathbf{-k/N, j} \rangle} = \sum_{\mathbf{k=0}}^{\mathbf{N-1}} \overline{\Theta[\mathbf{k}] e^{-i 2\pi \langle \mathbf{k/N, j} \rangle} } = \overline{\textstyle{ \sum_{\mathbf{k=0}}^{\mathbf{N-1}}} \Theta[\mathbf{k}] e^{-i 2\pi \langle \mathbf{k/N, j} \rangle} } = \overline{(\mathcal{F} \Theta)[\mathbf{j}]} 
    \end{aligned}
\end{equation*}

Due to the circular continuation of $f$ and the fact that the exponential function is $2\pi$-periodic on the imaginary axis, we can simplify the rightmost sum to the Fourier transform of $f$:
\begin{equation*}
    \begin{aligned}
        \sum_{\mathbf{n=0}}^{\mathbf{N-1}} f[\mathbf{n+k}] e^{-i 2\pi \langle \mathbf{(n+k)/N, j} \rangle} = (\mathcal{F}f)[\mathbf{j}]
    \end{aligned}
\end{equation*}
In summary, this proves the statement $$\hat{g}[\mathbf{j}] = \overline{(\mathcal{F} \Theta)[\mathbf{j}] } \cdot (\mathcal{F}f)[\mathbf{j}].$$
\end{proof}

\subsection{Matrix Formulation of the Cross-Correlation Theorem} \label{app:matrix} Here, we make the reformulation of the cross-correlation theorem into matrix multiplications (which are used to derive the statement in Theorem \ref{thm:NormTheorem}) more precise.
\begin{remark} \normalfont
Let $\tilde{\X}= \mathbb{C}^{N_1 \times \cdots \times N_d}$, $\vec{\X}= \mathbb{C}^{N_1 \cdots N_d}$. Let further $V:=\toVec: \tilde{\X} \to \vec{\X}$ be an operation that reorders a vector from $\tilde{X}$ into a (column) vector in $\vec{X}$, which is a unitary operator (i.e. $V^{-1}=V^\ast$, where $V^\ast$ is the adjoint of $V$). Representing an operator in a different coordinate system entails first transforming to the required domain of the operation (here: via the transition operator $V^\ast$), and then transforming the output to the desired codomain (here: via $V$). This means that for the linear operators $\DFT$ and $\ConvOp_\theta$ to be reformulated into matrices, we define $F:=V\DFT V^\ast$ and $\MatrixConvOp_\theta:= V \tilde{\ConvOp}_\theta V^\ast$. Likewise, the entry-wise multiplication in $\tilde{X}$ (formulated as a bilinear operator)
\begin{equation}
    \begin{aligned}
    M: \tilde{\X} \times \tilde{\X} &\to \tilde{\X} \\
             (a,b)&\mapsto a \odot b
    \end{aligned}
\end{equation}
is equivalently represented as an operation in $\vec{X}$ via
\begin{equation}
    \begin{aligned}
    \vec{M}: \vec{\X} \times \vec{\X} &\to \vec{\X} \\
             (c,d)&\mapsto V\cdot M(V^\ast c ,V^\ast d).
    \end{aligned}
\end{equation}
According to the Cross-Correlation Theorem \ref{app:CrossCorrelationTheorem}, it holds that $\DFT \ConvOp_\theta x = M( \DFT \Theta, \DFT x)$ for all $x \in \tilde{\X}$. Due to the unitarity of $V$, it holds that $\DFT = V^\ast F V$ and $\ConvOp_\theta=V^\ast K_\theta V$, meaning that
$$\DFT \ConvOp_\theta x = (V^\ast F V) (V^\ast K_\theta V) x = V^\ast F K_\theta \toVec{(x)}$$
and
$$M( \DFT \Theta, \DFT x) =  M( (V^\ast F V) \Theta, (V^\ast F V) x) = M( V^\ast \toVec (\hat{\Theta}), V^\ast F \toVec (x)).$$ Left-multiplying by $V$ then leads to
$$F \MatrixConvOp_\theta \toVec (x) = \vec{M} ( F \toVec{(\Theta)},  F \toVec{(x)} ),$$
due to the unitarity of $V$. Note that entry-wise multiplication in $\vec{\X}$ can be represented by the multiplication of a diagonal matrix with a column vector, which results in the statement $$F \MatrixConvOp_\theta \toVec (x) = \diag (F \toVec{(\Theta)}) \toVec{(x)}.$$
\end{remark}

\subsection{Matrix Spectral Norm Equality} \label{appdx:matrix_norm_equality}
The spectral norm of the real-to-real convolution (according to Theorem \ref{thm:NormTheorem}) is derived using its eigenvalues, when viewed as a complex-to-complex operator. Since eigenvalues of operations defined on spaces over $\mathbb{R}$ and $\mathbb{C}$ differ, we now assert that the spectral norm of matrix with only real entries (and by extension, our real-to-real convolutional operator) does not depend on whether we view it as a matrix in $\mathbb{R}^{n \times n}$ or as a matrix in $\mathbb{C}^{n \times n}$.
\begin{lemma} \label{appdx:matrix_norm_equality}
We denote the spectral norm for real matrices $K \in \mathbb{R}^{n \times n} $ by $$\|K\|_{2,\mathbb{R}^n}:=\max_{\substack{x\in \mathbb{R}^n \\ \|x\|_2=1}} \|Kx \|_2,$$ as well as the spectral norm for complex matrices $\tilde{K} \in \mathbb{C}^{n \times n}$ by $$\|\tilde{K}\|_{2,\mathbb{C}^n}:=\max_{\substack{x\in \mathbb{C}^n \\ \|x\|_2=1}} \|\tilde{K}x \|_2.$$
Then for any $K \in \mathbb{R}^{n \times n}$, it holds that $$\|K\|_{2,\mathbb{R}^n} = \|K\|_{2,\mathbb{C}^n}.$$
\end{lemma}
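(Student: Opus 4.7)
The plan is to prove two inequalities. The direction $\|K\|_{2,\mathbb{R}^n} \leq \|K\|_{2,\mathbb{C}^n}$ is immediate, since $\mathbb{R}^n$ embeds isometrically into $\mathbb{C}^n$: every real unit vector is a complex unit vector, and $Kx$ has the same Euclidean norm in either interpretation. So the maximum over the smaller set cannot exceed the maximum over the larger set.

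The substantive direction is $\|K\|_{2,\mathbb{C}^n} \leq \|K\|_{2,\mathbb{R}^n}$. Here I would pick an arbitrary $x \in \mathbb{C}^n$ with $\|x\|_2 = 1$ and decompose it into its real and imaginary parts, $x = u + iv$ with $u, v \in \mathbb{R}^n$. The key observation is that because $K$ has real entries, $Ku$ and $Kv$ are both real vectors, so $Kx = Ku + iKv$ is the decomposition of $Kx$ into real and imaginary parts. Consequently $\|Kx\|_2^2 = \|Ku\|_2^2 + \|Kv\|_2^2$, and similarly $\|x\|_2^2 = \|u\|_2^2 + \|v\|_2^2 = 1$. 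Applying the real spectral norm bound to each piece gives $\|Ku\|_2^2 \leq \|K\|_{2,\mathbb{R}^n}^2 \|u\|_2^2$ and likewise for $v$. Summing yields $\|Kx\|_2^2 \leq \|K\|_{2,\mathbb{R}^n}^2 (\|u\|_2^2 + \|v\|_2^2) = \|K\|_{2,\mathbb{R}^n}^2$. Taking the supremum over all complex unit $x$ closes the inequality.

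Combining both directions gives the equality. I do not expect a real obstacle here; the only mildly delicate point is remembering that the decomposition $Kx = Ku + iKv$ is genuinely the real/imaginary splitting of $Kx$, which relies essentially on $K$ being real-valued, and that the complex Euclidean norm of a vector equals the sum of squared norms of its real and imaginary parts. An alternative, equally short route would be to observe that $\|K\|_2 = \sqrt{\lambda_{\max}(K^\top K)}$ in both settings and that $K^\top K$ is a real symmetric positive semidefinite matrix whose eigenvalues are real and identical whether we diagonalize it over $\mathbb{R}$ or over $\mathbb{C}$; but the direct decomposition argument is cleaner and avoids invoking the spectral theorem separately in both settings.
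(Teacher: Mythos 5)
Your proof is correct, and it takes a genuinely different route from the paper's. The easy inequality $\|K\|_{2,\mathbb{R}^n} \leq \|K\|_{2,\mathbb{C}^n}$ is handled identically in both. For the substantive direction, you split a complex unit vector as $x = u + iv$ and exploit that $K$ real implies $Kx = Ku + iKv$ is again the real/imaginary decomposition, so that $\|Kx\|_2^2 = \|Ku\|_2^2 + \|Kv\|_2^2 \leq \|K\|_{2,\mathbb{R}^n}^2\left(\|u\|_2^2 + \|v\|_2^2\right) = \|K\|_{2,\mathbb{R}^n}^2$; this is entirely elementary and needs no spectral theory. The paper instead diagonalizes the real symmetric matrix $K^T K$ by an orthogonal matrix $S$, computes $\|K\|_{2,\mathbb{R}^n}^2 = \sigma_1^2$ as the maximum of the quadratic form $y^T D y$ over the real unit sphere, and then bounds the corresponding complex maximum $y^H D y = \sum_k \sigma_k^2 |y_k|^2$ by $\sigma_1^2$ — essentially the alternative you sketch at the end of your proposal. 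What the paper's route buys is an explicit identification of the norm with the largest singular value, which dovetails with the surrounding discussion of singular values of convolution operators; what your route buys is brevity and the avoidance of the spectral theorem altogether. Both are complete and correct.
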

\begin{proof}
We will prove the above statement by showing that both $\|K\|_{2,\mathbb{R}^n} \leq \|K\|_{2,\mathbb{C}^n}$ and $\|K\|_{2,\mathbb{R}^n} \geq \|K\|_{2,\mathbb{C}^n}$.\\

It's easy to see that
$$\|A\|_{2,\mathbb{R}^n}=\max_{\substack{x\in \mathbb{R}^n \\ \|x\|_2=1}} \|Ax \|_2 \leq \max_{\substack{x\in \mathbb{C}^n \\ \|x\|_2=1}} \|Ax \|_2 = \|A\|_{2,\mathbb{C}^n}, $$
since $\mathbb{R}^n \subset \mathbb{C}^n$.

On the other hand, since $K$ only has real entries, $\MatrixConvOp^T \MatrixConvOp$ is symmetric and real, which means that there is an orthogonal matrix $S$ which diagonalizes $\MatrixConvOp^T \MatrixConvOp$, i.e. $$\MatrixConvOp^T \MatrixConvOp= S^T \overbrace{\diag (\sigma^2_1, \dots, \sigma^2_n)}^{=:D} S,$$ where $\sigma_1 \geq \dots \geq \sigma_n$ are the singular values of $\MatrixConvOp$.

Then, when viewed as a real matrix, 
$$\|\MatrixConvOp\|^2_{2,\mathbb{R}^n}=\max_{\substack{x\in \mathbb{R}^n \\ \|x\|_2=1}} \| \MatrixConvOp x\|^2_2 =\max_{\substack{x\in \mathbb{R}^n \\ \|x\|_2=1}} x^T \MatrixConvOp^T \MatrixConvOp x= \max_{\substack{x\in \mathbb{R}^n \\ \|x\|_2=1}} (Sx)^T D (S x)=\max_{\substack{y\in \mathbb{R}^n \\ \|y\|_2=1}} y^T D y = \max_{\substack{y\in \mathbb{R}^n \\ \|y\|_2=1}} \sum_{k=1}^n \sigma^2_k y^2_k = \sigma_1^2.$$
Note that $S$, as an orthogonal matrix, is in particularly a unitary matrix, i.e. $S^H=S^T=S^{-1}$. Thus,
\begin{align*}
    &\|\MatrixConvOp\|^2_{2,\mathbb{C}^n}=\max_{\substack{x\in \mathbb{C}^n \\ \|x\|_2=1}} \| \MatrixConvOp x\|^2_2 =\max_{\substack{x\in \mathbb{C}^n \\ \|x\|_2=1}} x^H \MatrixConvOp^H \MatrixConvOp x= \max_{\substack{x\in \mathbb{C}^n \\ \|x\|_2=1}} (Sx)^H D (S x)=\max_{\substack{y\in \mathbb{C}^n \\ \|y\|_2=1}} y^H D y = \max_{\substack{y\in \mathbb{C}^n \\ \|y\|_2=1}} \sum_{k=1}^n \sigma^2_k |y_k|^2 \\
    \leq &\max_{\substack{y\in \mathbb{C}^n \\ \|y\|_2=1}} \sum_{k=1}^n \sigma^2_1 |y_k|^2 = \sigma_1^2 = \|\MatrixConvOp\|^2_{2,\mathbb{R}^n}.
\end{align*}

\end{proof}

\subsection{Zero-padded multi-channel convolutions} \label{appdx:depthwise_proof} 
A single-channel zero-padded convolution's spectral norm is upper bounded by the spectral norm of the circulant convolution on an enlarged domain.
\begin{lemma}\label{lem:PaddingLemma}
Let $\theta$ be a filter of odd spatial dimensions, i.e. $\theta \in \mathcal{P}:=\mathbb{R}^\mathbf{k}$ with $\mathbf{k}=\mathbf{2p+1}$ for $p_i \in \mathbb{N}_0$ for all $i \in \{1,\cdots,d\}$. Let
$$\ConvOp^\text{zero}_{\theta,\mathbf{{N}}}: \mathbb{R}^\mathbf{N} \to  \mathbb{R}^\mathbf{N}$$
denote the zero-padded single-channel convolution with filter $\theta$ on the domain $\mathbb{R}^ \mathbf{N}$ and let for $\overline{\mathbf{N}}=\mathbf{N+2p}$
$$\ConvOp^\text{circ}_{\theta,\mathbf{\bar{N}}}:\mathbb{R}^\mathbf{N+2p} \to \mathbb{R}^\mathbf{N+2p}$$
denote the respective circulant convolution on the domain $\mathbb{R}^\mathbf{N+2p}$. Then 
$$\|\ConvOp^{\textrm{ \normalfont zero}}_{\theta,\mathbf{{N}}}\|_2 \leq \|\ConvOp^{\textrm{ \normalfont circ}}_{\theta,\mathbf{\bar{N}}}\|_2.$$ 
\end{lemma}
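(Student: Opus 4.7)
The plan is to realize the zero-padded convolution as a composition of three operators: an isometric zero-embedding $E:\mathbb{R}^{\mathbf{N}}\to\mathbb{R}^{\mathbf{N}+2\mathbf{p}}$ that places the input in the ``interior'' of a larger array surrounded by $p_i$ zeros on each side, followed by the circulant convolution on the enlarged domain, followed by the restriction $R:\mathbb{R}^{\mathbf{N}+2\mathbf{p}}\to\mathbb{R}^{\mathbf{N}}$ to that same interior region. Since $E$ is an isometry ($\|E\|_2=1$) and $R$ is its adjoint (so $\|R\|_2=1$), submultiplicativity of the operator norm will then give
\[
\|\ConvOp^{\text{zero}}_{\theta,\mathbf{N}}\|_2 \;=\; \|R\,\ConvOp^{\text{circ}}_{\theta,\bar{\mathbf{N}}}\,E\|_2 \;\leq\; \|R\|_2\,\|\ConvOp^{\text{circ}}_{\theta,\bar{\mathbf{N}}}\|_2\,\|E\|_2 \;=\; \|\ConvOp^{\text{circ}}_{\theta,\bar{\mathbf{N}}}\|_2,
\]
which is exactly the claim.

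The crux is therefore to verify the operator identity $\ConvOp^{\text{zero}}_{\theta,\mathbf{N}} = R\,\ConvOp^{\text{circ}}_{\theta,\bar{\mathbf{N}}}\,E$. For any $x\in\mathbb{R}^{\mathbf{N}}$ and any ``interior'' index $\mathbf{n}$ with $\mathbf{p}\leq\mathbf{n}<\mathbf{N}+\mathbf{p}$, the circulant cross-correlation evaluated at $\mathbf{n}$ involves only shifts $\mathbf{n}+\mathbf{k}$ for $-\mathbf{p}\leq\mathbf{k}\leq\mathbf{p}$. By the choice of padding width, these shifts stay inside $[0,\mathbf{N}+2\mathbf{p}-1]$ componentwise, so no circular wraparound occurs. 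Moreover, $Ex$ agrees with $x$ in the interior and vanishes outside, so the sum collapses to precisely the contributions of the zero-padded convolution at the corresponding location in $\mathbb{R}^{\mathbf{N}}$. Composing with $R$ discards all values at non-interior positions and returns this common vector in $\mathbb{R}^{\mathbf{N}}$.

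The main obstacle is the bookkeeping for this index identification: one must fix a convention for where the filter is ``centered'' (the hypothesis $\mathbf{k}=2\mathbf{p}+1$ gives a natural center) and for which block of $\mathbf{N}$ coordinates in $\mathbb{R}^{\bar{\mathbf{N}}}$ plays the role of the interior, and then check that with these conventions the sums defining $\ConvOp^{\text{zero}}_{\theta,\mathbf{N}}x$ and $\ConvOp^{\text{circ}}_{\theta,\bar{\mathbf{N}}}(Ex)$ restricted to the interior coincide entry-by-entry. Once this index calculation is set up, the inequality follows immediately from submultiplicativity, and the argument is independent of $d$, so the same proof handles 1D, 2D, and higher-dimensional convolutions uniformly.
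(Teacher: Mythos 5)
Your proof is correct and takes essentially the same route as the paper: the paper likewise zero-pads the input to size $\mathbf{N}+2\mathbf{p}$, realizes the circulant convolution on the enlarged domain (as a valid convolution after circulant padding), and concludes via the fact that the zero-padding operator has norm one. Your explicit factorization $\ConvOp^{\text{zero}}_{\theta,\mathbf{N}} = R\,\ConvOp^{\text{circ}}_{\theta,\overline{\mathbf{N}}}\,E$ with $\|E\|_2=\|R\|_2=1$ is, if anything, a slightly cleaner piece of bookkeeping, since it handles the extra (generally nonzero) boundary outputs of the circulant convolution through the contraction $R$ rather than through the paper's remark that only zeros enter the sum of squares.
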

\begin{proof}
Let $\pad^\text{zero}_{\mathbf{M} \to \mathbf{M'}}$ and $\pad^\text{circ}_{\mathbf{M} \to \mathbf{M'}}$ be the zero-padding respectively circulant padding operation from $\mathbb{R}^\textbf{M}$ to $\mathbb{R}^\textbf{M'}$ (for some $\mathbf{M}$ and $\mathbf{M}'$). Both can be easily verified to be linear. We define $\overline{\mathbf{N}}=\mathbf{N+2p}$ and $\overline{\overline{\mathbf{N}}}=\mathbf{N+4p}$.

Let ${\ConvOp^{\text{valid}}_{\theta,\mathbf{M}}} : \mathbb{R}^{\mathbf{M}} \to \mathbb{R}^{\mathbf{M-2p}}$ be the valid convolution with filter $\theta$ on the domain $\mathbb{R}^\textbf{M}$. 

Note that for any $x \in \X$, it holds that
\begin{equation}
    \begin{aligned}
        \| \ConvOp^{\text{zero}}_{\theta,\mathbf{N}} \cdot x \|_2 = \|\ConvOp^{\text{valid}}_{\theta,\overline{\mathbf{N}}} \cdot \pad^\text{zero}_{\mathbf{N} \to \overline{\mathbf{N}}} \cdot x\|_2
        = \| \ConvOp^{\text{valid}}_{\theta,\overline{\overline{\mathbf{N}}}} \cdot \pad^\text{circ}_{\overline{\mathbf{N}} \to \overline{{\overline{\mathbf{N}}}}} \cdot \pad^\text{zero}_{\mathbf{N} \to \overline{\mathbf{N}}} \cdot x\|_2, 
    \end{aligned}
\end{equation}
since at every equation, only zeros are added to the sum-of-squares when calculating the 2-norms.
It follows that
\begin{equation}
    \begin{aligned}
        \| \ConvOp^{\text{zero}}_{\theta,\mathbf{N}} \|_2 =& 
        \max_{\substack{x\in \X \\ \|x\|_2=1}} \| \ConvOp^{\text{valid}}_{\theta,\overline{\overline{\mathbf{N}}}} \cdot \pad^\text{circ}_{\overline{\mathbf{N}} \to \overline{{\overline{\mathbf{N}}}}} \cdot \pad^\text{zero}_{\mathbf{N} \to \overline{\mathbf{N}}} \cdot x \|_2 \\
        \leq &  \| \underbrace{\ConvOp^{\text{valid}}_{\theta,\overline{\overline{\mathbf{N}}}} \cdot \pad^\text{circ}_{\overline{\mathbf{N}} \to \overline{{\overline{\mathbf{N}}}}}}_{=\ConvOp^{\text{circ}}_{\theta,\mathbf{\overline{N}}}} \|_2 \cdot \underbrace{\| \pad^\text{zero}_{\mathbf{N} \to \overline{\mathbf{N}}}\|_2}_{=1} \\
        = & \| \ConvOp^{\text{circ}}_{\theta,\mathbf{\overline{N}}} \|_2,
    \end{aligned}
\end{equation}
finishing our proof.
\end{proof}

\MultiNormTheorem*
\begin{proof}
The zero-padded multi-channel depthwise convolution is isometrically isomorphic to a block diagonal matrix $M$, where each block $M_k$ is isometrically isomorphic to a matrix representing the zero-padded single-channel convolution. With Lemma \ref{lem:PaddingLemma} and the fact that the set of singular values of $M$ is the union of all singular values of the $M_k$, the statement follows immediately.
\end{proof}

\newpage

\section{Experimental details}
This section details the hyperparameters, settings and data preprocessing steps used for the experiments in section \ref{sec:experiments}.

\subsection{Lipschitz constant and learning rate}\label{subsec:app_lip_const_lr_exp_details}
An overview of the training settings for the CIFAR-10 Lipschitz constant and learning rate experiment can be found in Table \ref{tab:overview_training_settings_lr_experiment}. We trained the networks on a NVIDIA V100 GPU with scaling constants from 1 to 10 and learning rates $10^{-5}, 10^{-4}, 10^{-3}, 10^{-2}, 10^{-1}$ both with soft and hard scaling.
\begin{table}[h]
    \centering
    \caption{Overview of training settings for CIFAR-10 learning rate experiment.}
    \begin{tabular}{l r}
    \toprule
      \textbf{Parameter}   &  \textbf{CIFAR-10} \\ \midrule
      Network architecture & ResNet-34 \\ 
      Layers & Replaced conventional convs by depthwise separable convs \\
      Dataset & CIFAR-10  \\ 
      Loss function & Cross Entropy Loss \\ 
      Optimizer & SGD (Momentum: 0.9)\\ 
      Learning rate & $10^{-5}$, $10^{-4}$, $10^{-3}$, $10^{-2}$, $10^{-1}$\\ 
      Learning rate schedule & Multiplication of learning rate with $0.1$ at epoch milestones\\ 
      Scheduler milestones & 150, 250 \\ 
      Epochs & 300 \\ 
      Batch size & 128\\ 
      Spectral normalization & True \\ 
      Scaling constant & 1, 2, 3, 4, 5, 6, 7, 8, 9, 10 \\ 
      Pointwise convolution $\varepsilon$ & 0.01 \\
      Soft/Hard scaling & Soft / Hard \\ 
      Initialization learnable parameter $s$ & 3 / - \\ \bottomrule
    \end{tabular}
    \label{tab:overview_training_settings_lr_experiment}
\end{table}

We use the CIFAR-10 dataset \cite{krizhevsky2009learning} with the default train/test split and the following data pre-processing steps during training and testing:
\begin{enumerate}
    \item[a.] \textbf{Training}: 
        \begin{itemize}
            \item random cropping (with a padding of size 4) to images of size $32 \times 32$
            \item random horizontal flipping
            \item normalizing per channel with mean $[0.4914, 0.4822, 0.4465]$ and standard deviation $[0.2023, 0.1994, 0.2010]$
        \end{itemize}
    \item[b.] \textbf{Testing}:
    \begin{itemize}
    \item normalizing per channel with mean $[0.4914, 0.4822, 0.4465]$ and standard deviation $[0.2023, 0.1994, 0.2010]$
    \end{itemize}
\end{enumerate}

\subsection{Spectral normalization on ImageNet}\label{subsec:app_spec_norm_exp_details}
An overview of the hyperparameters used for the ImageNet classification experiments is shown in Table \ref{tab:overview_imagenet_exp}. We trained the networks on two NVIDIA V100 GPUs with scaling constants 10, 12, 15, 20, 30, 40 and learning rates $10^{-3}$, $10^{-3}$, $10^{-3}$, $5\cdot 10^{-4}$, $5\cdot 10^{-4}$, $3\cdot 10^{-4}$ respectively. 
\begin{table}[h]
    \centering
    \caption{Overview of training settings for ImageNet classification experiments.}
    \begin{tabular}{l r}
    \toprule
      \textbf{Parameter}   &  \textbf{ImageNet} \\ \midrule
      Network architecture & MobileNetV2 \\ 
      Layers & as in \citep{sandler2018mobilenetv2} \\
      Dataset & ImageNet \\ 
      Loss function & Cross Entropy Loss\\ 
      Optimizer & SGD (Momentum: 0.9)\\ 
      Learning rate & $10^{-3}$/ $10^{-3}$/ $10^{-3}$/ $5\cdot 10^{-4}$/ $5\cdot 10^{-4}$/ $3\cdot 10^{-4}$\\ 
      Learning rate schedule & Multiplication of learning rate with $0.1$ at epoch milestones\\ 
      Scheduler milestones & 50, 100 \\ 
      Epochs & 150 \\ 
      Batch size & 128\\ 
      Spectral normalization & True\\ 
      Scaling constant & 10/ 12/ 15/ 20/ 30/ 40 \\ 
      Pointwise convolution $\varepsilon$ & 0.01 \\
      Soft/Hard scaling & Soft\\ 
      Initialization learnable parameter $s$ & 3/ 3/ 3/ 3/ 0.5/ 0.5 \\\bottomrule
    \end{tabular}
    \label{tab:overview_imagenet_exp}
\end{table}

We use the ImageNet dataset \cite{russakovsky2015imagenet} with the default train/val/test split and the following data pre-processing during training and validation:
\begin{enumerate}
    \item[a.] \textbf{Training}: 
        \begin{itemize}
            \item cropping of random size and resizing to images of size $224 \times 224$
            \item random horizontal flipping
            \item normalizing per channel with mean $[0.485, 0.456, 0.406]$ and standard deviation $[0.229, 0.224, 0.225]$
        \end{itemize}
    \item[b.] \textbf{Validation}:
    \begin{itemize}
    \item resizing image to size $256 \times 256$
    \item center cropping to size $224 \times 224$
    \item normalizing per channel with mean $[0.485, 0.456, 0.406]$ and standard deviation $[0.229, 0.224, 0.225]$
    \end{itemize}
\end{enumerate}

\subsection{Time complexity}
For the time complexity experiments, we trained identical networks with hyperparameters shown in Table \ref{tab:overview_cifar_timing} and Table \ref{tab:overview_imagenet_timing} -- with and without spectral normalization for CIFAR-10 and ImageNet classification. The data train/test split as well as the pre-processing steps for CIFAR-10 and ImageNet can be found in subsection \ref{subsec:app_lip_const_lr_exp_details} and \ref{subsec:app_spec_norm_exp_details}, respectively.
\begin{table}[h]
    \centering
    \caption{Overview of training settings for CIFAR-10 timing experiment.}
    \begin{tabular}{l r r}
    \toprule
      \textbf{Parameter}   &  \textbf{CIFAR-10 w SpecNorm} & \textbf{CIFAR-10 w/o SpecNorm} \\ \midrule
      Network architecture & ResNet-34 & ResNet-34\\ 
      Layers & Standard with DS Convs & Standard with DS Convs\\
      Dataset & CIFAR-10 & CIFAR-10\\ 
      Loss function & Cross Entropy Loss & Cross Entropy Loss \\ 
      Optimizer & SGD (Momentum: 0.9) & SGD (Momentum: 0.9)\\ 
      Learning rate & $10^{-2}$ & $10^{-2}$\\ 
      Learning rate schedule & Multiply lr with $0.1$ at milestones & Multiply lr with $0.1$ at milestones\\ 
      Scheduler milestones & 150, 250 & 150, 250\\ 
      Epochs & 300 & 300\\ 
      Batch size & 128 & 128\\ 
      Spectral normalization & True & False \\ 
      Scaling constant & 5 & - \\ 
      Pointwise convolution $\varepsilon$ & 0.01 & -\\
      Soft/Hard scaling & Hard & -\\ 
      Initialization learnable parameter $s$ & - & - \\
      Activations depthwise separable convolutions & None & None \\ \bottomrule
    \end{tabular}
    \label{tab:overview_cifar_timing}
\end{table}
\begin{table}[h]
    \centering
    \caption{Overview of training settings for ImageNet timing experiments.}
    \begin{tabular}{l r r}
    \toprule
      \textbf{Parameter}   &  \textbf{ImageNet w SpecNorm} & \textbf{ImageNet w/o SpecNorm}\\ \midrule
      Network architecture & MobileNetV2 & MobileNetV2 \\ 
      Layers & as in \citep{sandler2018mobilenetv2}  & as in \citep{sandler2018mobilenetv2}\\
      Dataset & ImageNet & ImageNet\\ 
      Loss function & Cross Entropy Loss & Cross Entropy Loss \\ 
      Optimizer & SGD (Momentum: 0.9) & SGD (Momentum: 0.9)\\ 
      Learning rate & $3 \cdot 10^{-4}$ & $10^{-3}$\\ 
      Learning rate schedule & Multiply lr with $0.1$ at milestones & -\\ 
      Scheduler milestones & 50, 100 & - \\ 
      Epochs & 150 & 5\\ 
      Batch size & 128 & 128\\ 
      Spectral normalization & True & False\\ 
      Scaling constant & 40 & -\\ 
      Pointwise convolution $\varepsilon$ & 0.01 & -\\
      Soft/Hard scaling & Soft & -\\ 
      Initialization learnable parameter $s$ & 0.5 & - \\
      Activations depthwise separable convolutions & ReLU6 & ReLU6\\ \bottomrule
    \end{tabular}
    \label{tab:overview_imagenet_timing}
\end{table}

\end{document}